\documentclass[journal]{IEEEtran}

\usepackage{cite}
\usepackage{amsmath,amssymb,amsfonts}
\usepackage{amsthm}
\usepackage{xcolor}
\usepackage[pdftex]{graphicx}
\usepackage{algorithm,algorithmic}
\usepackage{array}
\usepackage[caption=false,font=footnotesize]{subfig}
\usepackage{stfloats}
\usepackage{url}
\usepackage{booktabs,longtable}

\newtheorem{proposition}{Proposition}
\newtheorem{corollary}{Corollary}
\newtheorem{problem}{Problem}

\begin{document}

\title{
    Learning Altruistic Collaboration in Heterogeneous Multi-Team Systems
    \thanks{
        $^1$R. Karam, R. Lin, and B. A. Butler are with the Samueli School of Engineering, University of California, Irvine, Irvine, CA, 92697, USA. Email: {\tt\small \{rwkaram, rlin10, bbutler2\}@uci.edu}.
    }
    \thanks{
        $^2$M. Egerstedt is with the University of North Carolina at Chapel Hill, Chapel Hill, NC, 27599, USA. Email: {\tt\small magnus@unc.edu}.
    }
}

\author{\IEEEauthorblockN{Riwa Karam$^1$, Ruoyu Lin$^1$, Brooks A. Butler$^1$, Magnus Egerstedt$^2$}}

\maketitle

\begin{abstract}
    This paper studies heterogeneous multi-team collaboration through dynamic robot allocation, where robots are treated as transferable resources. Leveraging Hamilton’s rule from ecology as an altruistic decision making mechanism, we propose a multi-team collaborative resource allocation framework with heterogeneous capabilities, transfer costs, and capability-dependent contributions. The resulting allocation problem is combinatorial and is shown to be NP-hard. To address scalability, we develop a graph neural network policy under centralized training and decentralized execution that approximates the altruistic allocations based on Hamilton’s rule. The model operates over the team interaction graph and predicts robot-level transfer decisions and next robot-to-team assignments. The proposed approach is validated in a fire-fighting scenario through simulations and experiments, demonstrating that the learned policy achieves near-optimal performance while scaling to larger systems.
\end{abstract}

\begin{IEEEkeywords}
    Heterogeneous collaboration, graph neural networks, altruism, NP-hard problems.
\end{IEEEkeywords}
\section{Introduction} \label{sec:introduction}

Multi-robot systems hold out the promise of providing a scalable and robust approach for complex tasks such as disaster response~\cite{murphy2017disaster}, distributed sensing~\cite{cortes2004coverage}, and environmental monitoring~\cite{schwager2009decentralized}. A fundamental challenge in these systems is \textit{resource and task allocation}, where different resources and/or tasks must be assigned to different agents to optimize overall system performance~\cite{zavlanos2008distributed, park2021multi, notomista2021resilient, karam2025resource, lu2022distributed}. In many applications, these decisions must be made dynamically and in a decentralized manner while accounting for limited resources, spatial constraints, and time-varying environments~\cite{gerkey2004formal, korsah2013comprehensive, anussornnitisarn2005decentralized}.

Typical multi-robot allocation formulations assume independent or additive contributions of agents, where the performance of a team depends primarily on the number of assigned robots~\cite{choi2009consensus, karam2025resource}. However, heterogeneity inherently increases the complexity of each agent's contributions to its team, especially when mission success relies on collaborative interactions \cite{karam2026collaboration}. In heterogeneous systems, robots may possess complementary sensing, actuation, or computational capabilities, and the contribution of a robot depends on the composition of the team it joins. As a result, team performance may become set-dependent and generally non-separable, introducing combinatorial coupling across agents.

Recent work has explored learning-based approaches for combinatorial optimization in multi-agent systems~\cite{rashid2020monotonic, khalil2017learning}, with comprehensive overviews highlighting the potential of machine learning for solving large-scale combinatorial problems~\cite{bengio2021machine}. In particular, graph neural networks (GNNs) provide a framework for learning decentralized policies over relational structures, enabling scalable decision-making in multi-robot systems~\cite{scarselli2009graph, battaglia2018relational, li2020graph, tolstaya2020learning}. These methods have demonstrated efficiency with applications in routing~\cite{kool2019attention}, scheduling~\cite{zhang2020learning}, and multi-agent control~\cite{sanchez2018graph}, focusing on coordinated interactions. In this article, we are interested in how heterogeneous robots with complementary capabilities contribute to the team performance through joint effort.

Biological systems provide inspirations for interaction in multi-agent settings~\cite{bonabeau1999swarm}, with evolutionary perspectives offering fundamental explanations for cooperation and altruistic behavior~\cite{west2007evolutionary}. In particular, \textit{Hamilton's rule}~\cite{hamilton1964genetical} characterizes altruistic behavior through a trade-off between cost and benefit weighted by genetic relatedness. This principle has been adopted in multi-robot systems to regulate inter-team resource exchanges via a local decision rule in \cite{karam2025resource}. In that framework, robots are treated as transferable resources, and exchanged based on Hamilton's rule, which yields globally optimal allocations under homogeneous assumptions and diminishing returns. Similarly, it has been used to model inter-robot altruistic interactions and decision-making towards global objective contributions in \cite{butler2025hamilton}.

In this paper, we extend the altruistic collaboration framework from \cite{karam2025resource} to heterogeneous multi-team systems. Heterogeneity is modeled through capability vectors and set-dependent mission evaluation functions, and introduces explicit transfer costs associated with robot migration. The resulting allocation problem is NP-hard~\cite{garey2002computers, karp2009reducibility}, making exact solutions intractable at scale. We consider an event-triggered hybrid system with continuous-time mission dynamics and discrete-time robot exchanges, and aim to maximize a global mission evaluation function in a decentralized setting. To enable scalable decision-making, the problem is formulated as a structured prediction task over graphs, and a GNN policy is developed under centralized training and decentralized execution (CTDE) that operates over the team interaction graph and predicts robot-level transfer decisions, enabling scalable approximation of the underlying combinatorial optimization problem.

The main contributions of this paper are:
\begin{enumerate}
    \item An altruistic multi-team collaborative allocation framework with heterogeneous robots, capability-dependent mission evaluation functions and transfer costs;
    \item NP-hardness proof of the resulting heterogeneous multi-robot allocation problem via reduction from the Partition problem \cite{garey2002computers}, motivating the learning-based framework;
    \item A learning-based approach, formulated as a GNN under CTDE, that allows for a real-time solution to the otherwise intractable optimization problem; and
    \item Instantiation of the framework in a fire-fighting scenario, capturing heterogeneous sensing and actuation capabilities of different robots, and corresponding experiments in simulations and on a hardware testbed, demonstrating the scalability and near-optimal performance of the framework in this application.
\end{enumerate}

The remainder of this paper is organized as follows. Section~\ref{sec:formulation} reviews the homogeneous altruistic collaboration framework and presents the heterogeneous extension of it. Section~\ref{sec:complexity} analyzes and compares the complexity of the homogeneous and heterogeneous resource allocation problems. Section~\ref{sec:learning} presents the learning-based approach. Section~\ref{sec:application} instantiates the theoretical altruistic collaboration model to a fire-fighting application. Section~\ref{sec:results} provides experimental results, and Section~\ref{sec:conclusion} concludes the paper.

\section{Problem Formulation} \label{sec:formulation}

In this section, the multi-team robot allocation problem underlying altruistic collaboration is formalized. We begin by reviewing the homogeneous framework introduced in~\cite{karam2025resource}, where robots are treated as identical resources and Hamilton’s rule provides a local mechanism for regulating inter-team transfers. This framework is then extended to the heterogeneous setting, where robots possess distinct capabilities and team performance depends on the composition of assigned robots rather than their count alone.

\subsection{Homogeneous Altruistic Framework} \label{ssec:homogeneous}

This article builds upon the framework introduced in~\cite{karam2025resource}, where Hamilton’s rule from ecology is reinterpreted as a team-level decision criterion to regulate inter-team robot transfers. In that formulation, Hamilton’s rule serves as a local decision-maker that determines admissible exchanges, enabling distributed decision-making while implicitly improving a global objective. In this section, the homogeneous framework is summarized and its associated optimization problem is formulated.

In the homogeneous setting of~\cite{karam2025resource}, multiple teams of identical robots operate in distinct regions, and robots are treated as transferable resources whose allocation determines team performance. Interactions between teams are represented by an undirected graph \( G = (V, E) \), where \( V \) is the set of teams with cardinality \( M = |V| \), and \( E \subseteq V \times V \) encodes the pairwise team connections. The objective is to reallocate robots across teams to maximize a global mission evaluation function.

Hamilton’s rule, originally expressed as \( rB \ge C \) in evolutionary biology~\cite{hamilton1964genetical}, is adapted to define a decision-making rule for robot transfers between teams. The genetic relatedness term is modeled through a ratio of mission-importance weights, chosen to reflect differences in team priority, as
\begin{equation} \label{eq:relatedness}
    r_{ij} = \frac{w_j}{w_i},
\end{equation}
where \( w_i, w_j > 0 \) denote the relative importance assigned to teams \( i \) and \( j \), respectively. A transfer of a robot from team \( i \) to team \( j \) is admissible if $r_{ij} B_j > C_i$, where \( B_j \) is the marginal benefit of gaining a robot for team \( j \), and \( C_i \) is the marginal cost incurred by team \( i \). This condition ensures that the weighted gain of the receiving team exceeds the loss of the donor team.

Each team \( v \in V \) is associated with a mission evaluation function \( F_v(n_v) : \mathbb{N} \to \mathbb{R} \), where \( n_v \) denotes the number of robots assigned to team \( v \). The functions \( F_v \) are assumed to satisfy the following assumptions:
\begin{itemize}
    \item \textit{Strict monotonicity:} \( F_v(n_v + 1) > F_v(n_v) \),
    \item \textit{Diminishing returns:}
    \[
        F_v(n_v + 1) - F_v(n_v) \le F_v(n_v) - F_v(n_v - 1).
    \]
\end{itemize}
These assumptions imply that the marginal contribution of an additional robot decreases as team size increases, a property closely related to submodularity in set functions~\cite{krause2014submodular}. The marginal benefit and cost are defined as
\begin{equation} \label{eq:homogeneous_benefit_cost}
    \begin{aligned}
        B_j(n_j) &:= F_j(n_j + 1) - F_j(n_j), \\
        C_i(n_i) &:= F_i(n_i) - F_i(n_i - 1).
    \end{aligned}
\end{equation}
A key structural property established in~\cite{karam2025resource} is the \emph{pairwise uni-directionality} of admissible transfers. Under the above assumptions, the conditions
\begin{equation} \label{eq:uni_directionality_homogeneous}
    r_{ij} B_j(n_j) > C_i(n_i)
    \quad \text{and} \quad
    r_{ji} B_i(n_i) > C_j(n_j)
\end{equation}
cannot hold simultaneously. This property prevents cyclic exchanges between pairs of teams and ensures that transfers have a consistent direction.

When multiple teams are present, a team may have several admissible incoming and outgoing transfers. To resolve such conflicts, a local bidding mechanism is introduced in \cite{karam2025resource}. The pairwise uni-directionality of admissible transfers according to Hamilton's rule is referred to as \textit{Hamilton-admissible transfer} in this paper. For each Hamilton-admissible transfer \( i \to j \), team \( i \) computes the net gain $\Delta_{i \to j} = r_{ij} B_j(n_j) - C_i(n_i)$. Each team selects at most one outgoing and one incoming transfer based on the highest positive gain. These local decisions collectively aim to maximize the global objective
\begin{equation} \label{eq:global_func}
    \mathcal{G}(n_1, \dots, n_M) := \sum_{v \in V} w_v F_v(n_v).
\end{equation}

The resulting process alternates between determining Hamilton-admissible transfers, selecting transfers through local biddings, and executing only those that strictly improve the global objective. Under the assumptions of homogeneity, diminishing returns, and a fixed total number of robots, it is shown in~\cite{karam2025resource} that through an iterative negotiation and reallocation algorithm:
\begin{itemize}
    \item the global objective \( \mathcal{G} \) strictly increases, with diminishing returns, with each executed transfer,
    \item the process terminates in a finite number of steps, and
    \item the final allocation is a global maximizer of \( \mathcal{G} \).
\end{itemize}

For the rest of this section, we formulate the problem introduced in \cite{karam2025resource} as an optimization problem. Under homogeneity, the allocation is fully described by the team sizes as
\[
    \mathbf{n} := (n_1, \dots, n_M) \in (\mathbb{Z}^{+})^M,
\]
with \( \sum_{v=1}^M n_v = N \), where \( N \) is the total number of robots.
To express the problem at the robot level, let \( \mathcal{R} = \{ 1, \dots, N \} \) and the assignment variable be 
\begin{equation} \label{eq:one-hot}
    x_{r,v} =
    \begin{cases}
        1, & \text{if robot \( r \in \mathcal{R} \) is assigned to team \( v \in V \),} \\
        0, & \text{otherwise.}
    \end{cases}
\end{equation}
Since robots are homogeneous in \cite{karam2025resource}, the mission evaluation function of team \( v \) depends only on its team size, given by $n_v = \sum_{r \in \mathcal{R}} x_{r,v},$ $\forall v \in V$. The global objective can then also be written as
\begin{equation} \label{eq:global_func_assignment}
    \mathcal{G}(X) = \sum_{v \in V} w_v F_v \bigg( \sum_{r \in \mathcal{R}} x_{r,v} \bigg),
\end{equation}
where \( X = [x_{r,v}]_{r \in \mathcal{R},\, v \in V} \in \{ 0, 1 \}^{N \times M} \) denotes the assignment matrix. The generalized optimization problem whose feasible solutions correspond to the robot-to-teams assignment that is being solved in \cite{karam2025resource} can then be written as the following optimization problem where the global mission evaluation function from Equation~\eqref{eq:global_func_assignment} is maximized over the assignment matrix \( X \):
\begin{equation} \label{eq:opt_homogeneous_centralized}
    \begin{aligned}
        \max_{X} \quad 
        & \mathcal{G}(X) \\
        \text{s.t.} \quad 
        & \sum_{v \in V} x_{r,v} = 1, \quad \forall r \in \mathcal{R}, \\
        & \sum_{r \in \mathcal{R}} x_{r,v} \ge 1, \quad \forall v \in V, \\
        & x_{r,v} \in \{0,1\}, \quad \forall r \in \mathcal{R}, \, \forall v \in V.
    \end{aligned}
\end{equation}
The constraints in Equation~\eqref{eq:opt_homogeneous_centralized} represent that each robot is assigned to one team only and that each team has to have at least one robot at all times, respectively. This formulation represents the global assignment problem that the distributed process in \cite{karam2025resource} seeks to solve through local decisions based on Hamilton's rule. The iterative approach to altruistic resource allocation is presented in \cite{karam2025resource}.

\subsection{Heterogeneous Altruistic Collaboration} \label{ssec:heterogeneous}

To extend the framework in \cite{karam2025resource}, a heterogeneous multi-team setting is considered in which robots possess different capabilities. Unlike the homogeneous case, where team performance depends only on the number of assigned robots, the contribution of each robot now depends on its capabilities and the composition of the team it joins. As a result, the mission evaluation functions become set-dependent, fundamentally altering the structure of the assignment problem. Robot heterogeneity is encoded through capability vectors and introduce transfer costs associated with robot migration between teams.

Let each robot \( r \) be characterized by a capability vector \( \mathbf{\kappa}_r \in \{0, 1\}^Q \), where each entry indicates the presence~(\(1\)) or absence~(\(0\)) of a specific capability; \( Q \) denotes the number of capability types. For each team \( v \in V \), let the set of robots assigned to team \( v \) at iteration \( k \) be defined as
\[
    \mathcal{S}_v^k := \{ r \in \mathcal{R} \mid x^k_{r,v} = 1 \}, \,\, \mathcal{S}_v^k \subset \mathcal{R}.
\]
An example of the heterogeneous multi-team system is illustrated in Figure~\ref{fig:heterogeneous_example}, where teams, heterogeneous robots, and Hamilton-admissible transfer directions are shown. Note that Hamilton's rule in the figure are robot-specific; by contrast, in the homogeneous setting in Section~\ref{ssec:homogeneous}, Hamilton's rule would point from one team to the other, with no regard to which robot is chosen.

Since the assignment matrix \( X^k \) satisfies the constraint in Equation~\eqref{eq:one-hot}, for each robot \( r \in \mathcal{R} \) there exists a unique team \( v \in V \) such that \( x^k_{r,v} = 1 \). The current team of robot \( r \) (i.e., at iteration \( k \)) is defined as
\begin{equation} \nonumber
    \mathrm{cur}(r) = v \quad \text{such that} \quad x^k_{r,v}=1,
\end{equation}
and its chosen next team (i.e., at iteration \( k + 1 \)) is defined as
\begin{equation} \nonumber
    \mathrm{next}(r) = v \quad \text{such that} \quad x^{k+1}_{r,v} = 1.
\end{equation}
Let the binary transfer variables \( y^{k+1}_{r,i \to j} \), indicating whether robot \( r \) moves from team \( i \) to team \( j \), be
\begin{equation} \label{eq:binary_transfer}
    y^{k+1}_{r,i \to j} =
    \begin{cases}
        1, & \text{if robot $r \in \mathcal{R}$ moves from team $i$} \\
           & \text{to team $j$ at iteration $k+1$,} \\
        0, & \text{otherwise,}
    \end{cases}
\end{equation}
for all $r \in \mathcal{R}$ and $(i,j) \in E$. Note that \( \mathrm{next}(r) \neq \mathrm{cur}(r) \) happens when \( y^{k+1}_{r, \mathrm{cur}(r) \to \mathrm{next}(r)} = 1 \), otherwise, \( \mathrm{next}(r) = \mathrm{cur}(r) \), meaning robot \( r \) was not transferred at iteration \( k \). In particular, we get \( r \in \mathcal{S}^k_{\mathrm{cur}(r)} \) and \( r \in \mathcal{S}^{k+1}_{\mathrm{next}(r)} \). Let \(Y^{k+1} = \{y^{k+1}_{r,i\to j}\}\) then denote the set of binary transfer decision variables.

\begin{figure}[!t]
    \centering
    \includegraphics[width=1\columnwidth]{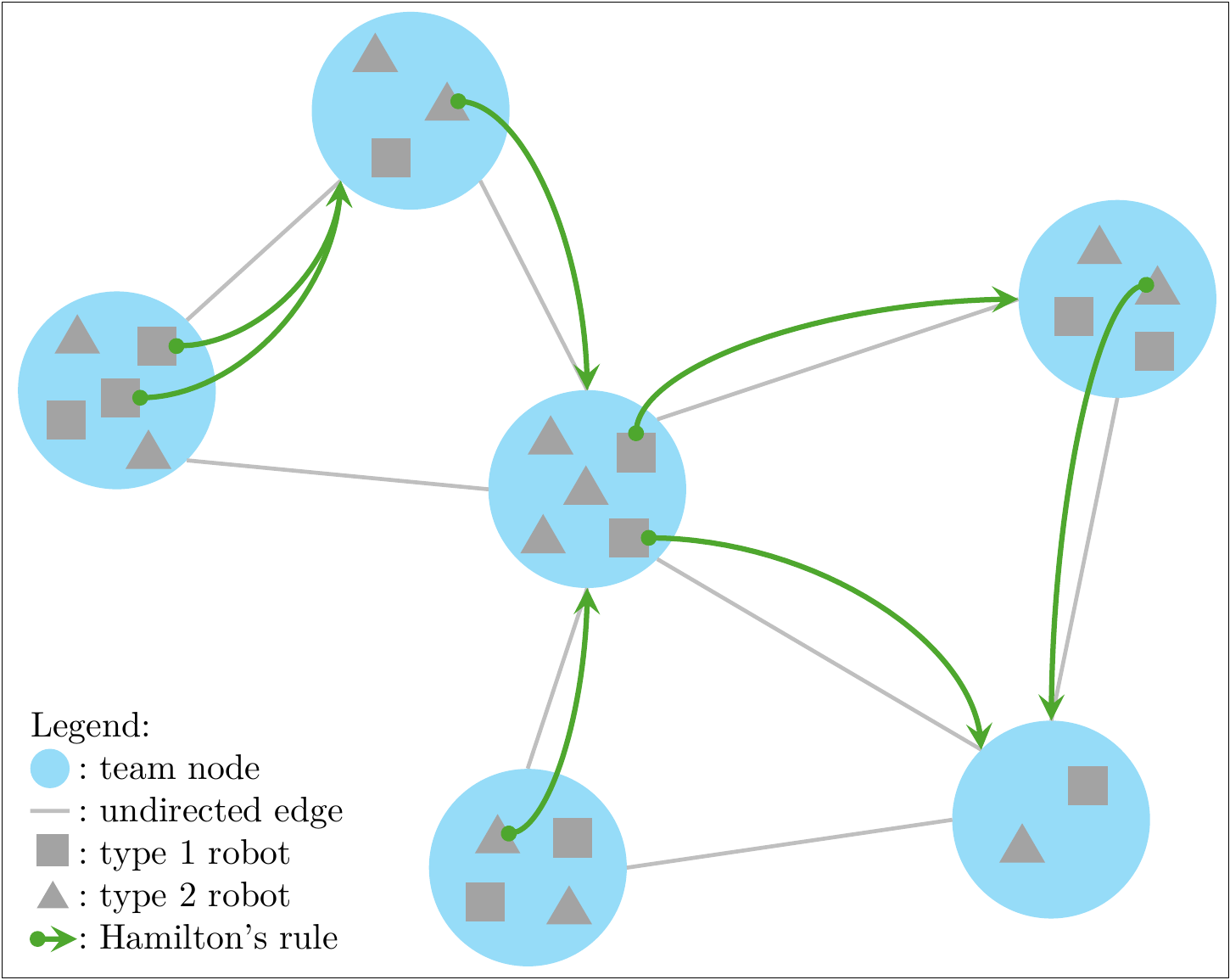}
    \caption{Example of a heterogeneous multi-team system where teams, depicted as blue nodes, are connected through an interaction graph with undirected edges and composed of robots with different capabilities. Robots of different types (type~1 and type~2, shown as squares and triangles, respectively) correspond to distinct capability classes as defined by their capability vectors. Directed edges (green arrows) represent candidate inter-team robot transfers that satisfy Hamilton-admissibility.}
    \label{fig:heterogeneous_example}
\end{figure}

The mission evaluation function of each team will depend not only on the number of robots in team \( v \) as defined in \cite{karam2025resource} and explained in Section~\ref{ssec:homogeneous}, but also on the set of robots forming team \( v \), \( \mathcal{S}_v^k \), since robots are heterogeneous and have a certain set of capabilities. Hence, for each team \( v \in V \), the mission evaluation function is defined as
\begin{equation} \label{eq:mission_eval_func}
    \mathcal{F}_v : \{ \mathcal{S} \mid \mathcal{S} \subseteq \mathcal{R}\} \to \mathbb{R},
\end{equation}
evaluated at the assigned robot set \( \mathcal{S}_v^k \). In particular, \( n_v^k = |\mathcal{S}_v^k| \), where \( |\mathcal{S}_v^k| \) denotes the cardinality of \( \mathcal{S}_v^k \). This formulation captures the dependence of team performance on the joint capabilities of its members, introducing collaboration as defined in \cite{karam2026collaboration}, thereby leading to combinatorial interactions between robots.

Since a transfer cost is introduced, we define the distance needed for robot \( r \) to travel from its current team \( \text{cur}(r) \) to any team \( v \in V \) to be \( d_{r, \, \mathrm{cur}(r) \to v} \in \mathbb{R^+} \). Hence, the transfer cost becomes
\begin{equation} \label{eq:transfer_cost}
    \mathcal{C}(X^k, X^{k+1}) := \sum_{r \in \mathcal{R}} \sum_{v \in V} x_{r, v}^{k+1} \, c_{r, v}(x^k_{r, v}),
\end{equation}
where \( c_{r, v}: \{ 0, 1 \} \to \mathbb{R}^+ \) can depend on the distance \( d_{r, \, \mathrm{cur}(r) \to v} \) and the speed of robot \( r \) denoted as \( s_r \in \mathbb{R^+} \).

It is worth mentioning that both the mission evaluation function and the transfer cost, as defined in Equations~\eqref{eq:mission_eval_func}~and~\eqref{eq:transfer_cost} can be then formulated in a mission-specific manner, where multiple formulation choices may be modeled even for the same mission. However, the definitions and notations in this section, summarized in Table~\ref{tab:notation}, present a general formulation of what each represents.

Since the goal of the multi-team system is to optimally and successfully achieve the assigned missions, we wish to optimize the system efficiency in a way that the global mission evaluation function is maximized and the global transfer cost is minimized. The following optimization problem reflects those needs:
\begin{equation} \label{eq:opt_heterogeneous_centralized}
    \begin{aligned}
        \max_{X} \quad 
        & \mathcal{G}(X) - \lambda \, \mathcal{C}(X^0, X) \\
        \text{s.t.} \quad 
        & \sum_{v \in V} x_{r,v} = 1, \quad \forall r \in \mathcal{R}, \\
        & \sum_{r \in \mathcal{R}} x_{r,v} \ge 1, \quad \forall v \in V, \\
        & x_{r,v} \in \{0,1\}, \quad \forall r \in \mathcal{R}, \, \forall v \in V,
    \end{aligned}
\end{equation}
with \( \mathcal{G}(X^k) := \sum_{v \in V} w_v \mathcal{F}_v(\mathcal{S}_v^k) \) being the heterogeneous global mission evaluation function, the regularization parameter \( \lambda > 0 \) representing the balance of the trade-off between minimizing the costs and maximizing mission performance, and \( X^0 \) being a random initial assignment matrix. Note that, similar to Equation~\eqref{eq:opt_homogeneous_centralized}, the use of \( k \) is omitted since the optimization problem in Equation~\eqref{eq:opt_heterogeneous_centralized} presents the general assignment problem to be solved in the heterogeneous formulation, with same constraints but different objective functions than Equation~\eqref{eq:opt_homogeneous_centralized}.

\begin{table}[!t]
    \centering
    \renewcommand{\arraystretch}{1.5}
    \caption{Problem Formulation Notation}
    \label{tab:notation}
    \begin{tabular}{c|l}
        \hline
        \textbf{Symbol} & \textbf{Description} \\
        \hline
        \( V \) & Set of teams \\
        \hline
        \( E \) & Set of team edges \\
        \hline
        \( G = (V, E) \) & Team interaction graph \\
        \hline
        \( \mathcal{R} \) & Set of robots \\
        \hline
        \( \mathbf{\kappa}_r \) & Capability vector of robot \( r \) \\
        \hline
        \( s_r \) & Speed of robot \( r \) \\
        \hline
        \( \mathcal{S}_v^k \) & Set of robots assigned to team \( v \) at iteration \( k \) \\
        \hline
        \( \mathcal{F}_v(\mathcal{S}_v^k) \) & Mission evaluation function \\
        \hline
        \( X^k \) & Assignment matrix at iteration \( k \) \\
        \hline
        \( \mathcal{G}(X^k) \) & Global mission evaluation function \\
        \hline
        \( y^{k+1}_{r,i\to j} \) & Transfer decision variable \\
        \hline
        \( h^{k+1}_{r,i\to j} \) & Hamilton-admissible indicator \\
        \hline
        \( \mathcal{C}(X^k,X^{k+1}) \) & Global transfer cost \\
        \hline
        \( \lambda \) & Regularization parameter \\
        \hline
    \end{tabular}
\end{table}

Next, the iterative, altruistic, and heterogeneous optimization problem that we wish to solve in this article is presented. Let the Hamilton-admissible indicator be
\begin{equation} \label{eq:hamilton_indicator}
    \begin{aligned}
        &h^{k+1}_{r, i \to j} := \mathbb{I} \left\{ r_{ij} \, B_{r,j}(\mathcal{S}_j^k) > C_{r,i}(\mathcal{S}_i^k) \right\}, \\
        &h^{k+1}_{r, i \to j}\in \{0, 1\}, \,\, \forall (i, j) \in E,
    \end{aligned}
\end{equation}
with
\begin{equation} \label{eq:heterogeneous_benefit_cost}
    \begin{aligned}
        B_{r, j}(\mathcal{S}_j^{k}) &:= \mathcal{F}_j(\mathcal{S}_j^{k} \, \cup \, \{r\} ) - \mathcal{F}_j(\mathcal{S}_j^{k}), \\
        C_{r, i}(\mathcal{S}_i^{k}) &:= \mathcal{F}_i( \mathcal{S}_i^{k} ) - \mathcal{F}_i( \mathcal{S}_i^{k} \, \setminus \, \{r\} ).
    \end{aligned}
\end{equation}
Note that, in the homogeneous setting, the benefit and cost factors consider the addition or subtraction of one robot (Equation~\eqref{eq:homogeneous_benefit_cost}), whereas in Equation~\eqref{eq:heterogeneous_benefit_cost}, they become set-dependent. The same relatedness factor is considered as in Equation~\eqref{eq:relatedness}. Hence, we get the following optimization problem that seeks to solve Equation~\eqref{eq:opt_homogeneous_centralized} iteratively:
\begin{equation} \label{eq:opt_heterogeneous_iterative}
    \begin{aligned}
        \max_{X^{k+1},\,Y^{k+1}} \quad
        & \mathcal{G}(X^{k+1}) - \lambda \, \mathcal{C}(X^k, X^{k+1}) \\
        \text{s.t.} \quad
        & \sum_{v \in V} x^{k+1}_{r,v} = 1, \quad \forall r \in \mathcal{R}, \\
        & \sum_{r \in \mathcal{R}} x^{k+1}_{r,v} \ge 1, \quad \forall v \in V, \\
        & x^{k+1}_{r,v} \in \{0,1\}, \quad \forall r \in \mathcal{R}, \, \forall v \in V, \\
        & x^{k+1}_{r,j} = x^k_{r,j}
          - \sum_{l:(j,l)\in E} y^{k+1}_{r,j \to l}
          + \sum_{i:(i,j)\in E} y^{k+1}_{r,i \to j}, \\
        & \qquad \forall r \in \mathcal{R}, \, \forall j \in V, \\
        & y^{k+1}_{r,i \to j} \in \{0,1\}, 
          \quad \forall r \in \mathcal{R}, \, \forall (i,j) \in E, \\
        & y^{k+1}_{r,i \to j} \le x^k_{r,i}, 
          \quad \forall r \in \mathcal{R}, \, \forall (i,j) \in E, \\
        & y^{k+1}_{r,i \to j} \le h^{k+1}_{r,i \to j}, 
          \quad \forall r \in \mathcal{R}, \, \forall (i,j) \in E.
    \end{aligned}
\end{equation}
Unlike the homogeneous case, teams are not restricted to a single incoming or outgoing transfer per iteration. Instead, multiple robots may be transferred simultaneously, reflecting the heterogeneous contributions of individual robots. Given the heterogeneity of robots and the set-dependent evolution of the team missions, the uni-directionality property in Equation~\eqref{eq:uni_directionality_homogeneous} no longer applies. Now, the Hamilton-admissible indicator \eqref{eq:hamilton_indicator} is dependent on the robot itself given the heterogeneity and individual contribution of each. Equation~\eqref{eq:opt_heterogeneous_iterative} contains the constraints present in Equations~\eqref{eq:opt_heterogeneous_centralized}~and~\eqref{eq:opt_homogeneous_centralized}, with the addition of constraints reflecting those additions. Those additional constraints ensure that each robot, if transferred, moves from its current team to exactly one destination team at each iteration, while enforcing Hamilton's rule that restricts transfers to those with positive weighted net benefit.

In summary, the problem that we seek to solve in this article is stated in Problem~\ref{pl:problem}.

\begin{problem} \label{pl:problem}
    Given a multi-team robot system with an interaction graph \( G \) and a set of heterogeneous robots \( \mathcal{R} \), where each robot is characterized by a capability vector \( \kappa_r \) and each team \( v \)'s performance is quantified by a set-dependent mission evaluation function \( \mathcal{F}_v \), the objective is to determine a sequence of local robot-to-team assignment decisions that maximizes the global mission performance \( \mathcal{G} \) while minimizing the global transfer cost \( \mathcal{C} \). These decisions are made in a decentralized manner through local interactions, leading to an optimal assignment \( X \).
\end{problem}

As detailed in this section, we approach this problem through an altruistic framework, where local robot transfer decisions are regulated using Hamilton’s rule to balance individual team costs with global performance improvements.

\section{Complexity Analysis} \label{sec:complexity}

In this section, we analyze and compare the computational complexity of the optimization problems in Equation~\eqref{eq:opt_homogeneous_centralized}, the iterative process in~\cite{karam2025resource}, Equation~\eqref{eq:opt_heterogeneous_centralized}, and Equation~\eqref{eq:opt_heterogeneous_iterative}.

\subsection{Homogeneous Resource Allocation} \label{ssec:homogeneous_complexity}

In the homogeneous case, the feasible set is the discrete simplex
\begin{equation} \nonumber
    \left\{ \mathbf{n} \in (\mathbb{Z}^+)^M \,\middle|\, \sum_{v=1}^M n_v = N \right\},
\end{equation}
whose cardinality is \( \binom{N-1}{M-1} \).
A brute-force solution to Equation~\eqref{eq:opt_homogeneous_centralized} enumerates all feasible allocations and evaluates the global objective
\( \mathcal{G}(\mathbf{n}) = \sum_{v\in V} w_v F_v(n_v) \) for each.
Assuming evaluating \(\mathcal{G} \) takes \( O(M) \) time, the brute-force complexity is
\begin{equation} \nonumber
    O\!\left(M \binom{N-1}{M-1}\right),
\end{equation}
which grows combinatorially in \( N \) and \( M \).

The altruistic iterative process in~\cite{karam2025resource} restricts explicit allocation by using Hamilton’s rule and a bidding mechanism over \( G = (V, E) \). Per iteration, it performs:
\begin{itemize}
    \item Hamilton’s-rule checks and bid computations on each edge: \( O(|E|) \);
    \item selecting at most one outgoing and one incoming collaboration per team (implemented by scanning neighbors): \( O(|E|) \) total;
    \item updating team sizes and marginal values: \( O(M) \).
\end{itemize}
Hence the per-iteration complexity is \( O(|E| + M) = O(|E|) \).
Moreover, under the assumptions in~\cite{karam2025resource}, the global objective \( \mathcal{G} \) strictly increases whenever a transfer is executed, so the same allocation \( \mathbf{n} \) cannot be revisited. Because there are only \( \binom{N-1}{M-1} \) feasible allocations, the number of executed transfers is at most \( \binom{N-1}{M-1} - 1 \). Therefore, a loose worst-case runtime bound is
\begin{equation} \nonumber
    O \! \left( |E| \binom{N-1}{M-1} \right).
\end{equation}
In practice, far fewer transfers occur since the algorithm follows a monotone path to an optimum rather than exploring the full feasible set; the average runtime scales approximately as \( O(|V||E|) \).

\subsection{Heterogeneous Resource Allocation} \label{ssec:heterogeneous_complexity}

In the heterogeneous formulation (Equation~\eqref{eq:opt_heterogeneous_centralized}), the decision variable is the binary assignment matrix $X \in \{0,1\}^{N\times M}$. The number of feasible assignments is at most $M^N$ (each robot chooses one of $M$ teams), so brute force requires $O(M^N)$ objective evaluations, which is exponential in $N$.

\begin{proposition} \label{pr:proposition}
    The heterogeneous allocation problem in Equation~\eqref{eq:opt_heterogeneous_centralized} is NP-hard.
\end{proposition}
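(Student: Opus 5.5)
We reduce from Partition, as announced in the introduction. Recall that Partition asks, given positive integers $a_1,\dots,a_N$ with $\sum_r a_r = 2T$, whether there is a subset $A \subseteq \{1,\dots,N\}$ with $\sum_{r\in A} a_r = T$; it is NP-complete~\cite{garey2002computers}. The plan is to build, in polynomial time, an instance of Equation~\eqref{eq:opt_heterogeneous_centralized} whose optimal value reaches a threshold if and only if the Partition instance is a yes-instance. Then a polynomial-time solver for Equation~\eqref{eq:opt_heterogeneous_centralized} would decide Partition.

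\textbf{Construction.} Take $M=2$ teams joined by a single edge, and one robot per integer, $\mathcal{R}=\{1,\dots,N\}$. Encode $a_r$ as a robot attribute. Since $\mathcal{F}_v$ is an arbitrary set function in Equation~\eqref{eq:mission_eval_func}, we may define it through a per-robot weight $a_r$ (a scalar effort level, or a capability count folded into $\mathcal{F}_v$); the capability vectors can all be equal. Set $w_1=w_2=1$ and
\[
    \mathcal{F}_v(\mathcal{S}) = -\Big| \textstyle\sum_{r\in\mathcal{S}} a_r - T \Big|, \quad v\in\{1,2\},
\]
or, to keep values positive, use $\mathcal{F}_v(\mathcal{S}) = T - |\sum_{r\in\mathcal{S}} a_r - T|$. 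Remove the transfer cost so that it is constant: take $c_{r,v}\equiv 0$, or else let $c_{r,v}$ be independent of $v$, e.g.\ all distances equal. Then $\mathcal{C}(X^0,X)=\sum_r c_r$ is constant over feasible $X$ and $\lambda$ plays no role. Each term of $\mathcal{G}$ is at most $T$, so $\mathcal{G}(X)\le 2T$. Equality holds if and only if both team sums equal $T$, that is, if and only if $\mathcal{S}_1$ solves Partition. The constraint that each team be nonempty is harmless: a balanced partition with positive $a_r$ automatically has both sides nonempty, and otherwise the optimum is strictly below $2T$. The decision version ``is the optimum at least $2T$ (minus the constant cost)?'' therefore equals Partition, which gives NP-hardness of the optimization problem.

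\textbf{Main obstacle.} The delicate step is justifying that this $\mathcal{F}_v$ is a legitimate instance of the model. Two issues need care. First, the value must be computable in polynomial time from the instance encoding; this holds since evaluating $\mathcal{F}_v$ is a sum and an absolute value, and the magnitudes $a_r$ are given in binary, so the reduction is polynomial in the Partition input size. Second, the functions must respect any implicit structure, such as heterogeneity coming from capabilities. If a reader insists that $\mathcal{F}_v$ depend on $\mathcal{S}$ only through the capability vectors $\kappa_r$, I would encode $a_r$ via $\kappa_r\in\{0,1\}^Q$ with $Q=O(\log \max a_r)$, reading $\kappa_r$ as the binary digits of $a_r$. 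Here I would assume distinct capability profiles or allow duplicates, since $\mathcal{F}_v$ is evaluated on the robot set. If monotonicity were required, I would instead use $\mathcal{F}_v(\mathcal{S})=\min(\sum_{r\in\mathcal{S}}a_r, T)$, which is monotone and submodular-like; the same threshold $2T$ argument goes through, since $\min(x,T)+\min(2T-x,T)=2T$ if and only if $x=T$.
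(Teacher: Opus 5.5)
Your proposal is correct and follows essentially the same route as the paper: a reduction from Partition with $M=2$ teams, equal weights $w_1=w_2=1$, $\mathcal{F}_v(\mathcal{S})=-\bigl|\sum_{r\in\mathcal{S}}a_r - A/2\bigr|$ (up to an additive constant), and the transfer cost neutralized. Your version is slightly more careful than the paper's on four points the paper glosses over: you keep $\lambda>0$ by making $\mathcal{C}$ constant instead of setting $\lambda=0$, you explicitly check the nonemptiness constraint, you address the polynomial-size encoding of $\mathcal{F}_v$ (including binary capability vectors), and you add the monotone submodular variant $\min(\sum_{r\in\mathcal{S}} a_r, T)$.
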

\begin{proof}
    We reduce from the \textit{Partition} problem \cite{garey2002computers}, which is NP-complete.
    Given a set of positive integers $\{a_1, \dots, a_N\}$ with total sum $A = \sum_{r=1}^N a_r$, the Partition problem asks whether there exists a subset $\mathcal{I} \subseteq \mathcal{R}$ such that $\sum_{r \in \mathcal{I}} a_r = \sum_{r \notin \mathcal{I}} a_r = {A}/{2}$. A special instance of the heterogeneous allocation problem is constructed as follows:
    \begin{itemize}
        \item There are $M = 2$ teams, denoted as $1$ and $2$.
        \item Each robot $r \in \mathcal{R}$ has a scalar capability equal to $a_r$.
        \item The mission evaluation functions are defined as
        \[
            \mathcal{F}_v(\mathcal{S}_v) = -\left| \sum_{r \in \mathcal{S}_v} a_r - \frac{A}{2} \right|, \quad v \in \{1,2\}.
        \]
        \item The team weights are equal: $w_1 = w_2 = 1$.
        \item The transfer cost is set to zero: $\lambda = 0$.
        \item All transfers are admissible, i.e., $h^{k+1}_{r,i \to j} = 1$ for all feasible $(r,i,j)$.
    \end{itemize}
    Under these reductions, the objective in Equation~\eqref{eq:opt_heterogeneous_centralized} reduces to $\mathcal{G}(X) = \mathcal{F}_1(\mathcal{S}_1) + \mathcal{F}_2(\mathcal{S}_2)$. Since $\mathcal{S}_1$ and $\mathcal{S}_2$ form a partition of $\mathcal{R}$, we have $\sum_{r \in \mathcal{S}_1} a_r + \sum_{r \in \mathcal{S}_2} a_r = A$. Let $S = \sum_{r \in \mathcal{S}_1} a_r$. Then $\sum_{r \in \mathcal{S}_2} a_r = A - S$, and the objective can be written as $\mathcal{G}(X) = -\left| S - \frac{A}{2} \right| - \left| (A - S) - \frac{A}{2} \right|$. Noting that $\left| (A - S) - \frac{A}{2} \right| = \left| \frac{A}{2} - S \right| = \left| S - \frac{A}{2} \right|$, we get $\mathcal{G}(X) = -2 \left| S - \frac{A}{2} \right|$. Thus, maximizing $\mathcal{G}(X)$ is equivalent to minimizing $\left| S - \frac{A}{2} \right|$. The maximum value of $\mathcal{G}(X)$ is achieved if and only if $S = {A}/{2}$, which corresponds exactly to a valid partition. Therefore, solving the heterogeneous allocation problem allows us to determine whether a solution to the Partition problem exists. Since Partition is NP-complete, the heterogeneous allocation problem is NP-hard. Finally, Equation~\eqref{eq:opt_heterogeneous_centralized} is at least as general as this restricted instance, as it includes additional constraints such as transfer structure and costs. Hence, NP-hardness carries over to the full problem.
\end{proof}

Note that the hardness result does not rely on monotonicity or diminishing returns (assumptions made in the homogeneous formulation), and therefore applies to the general heterogeneous formulation where mission evaluation functions are arbitrary set-dependent functions.

\begin{corollary} \label{co:corollary}
    The heterogeneous, iterative, and altruistic allocation problem in Equation~\eqref{eq:opt_heterogeneous_iterative} is NP-hard.
\end{corollary}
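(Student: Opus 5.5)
We will prove the corollary by showing that the Partition instance from the proof of Proposition~\ref{pr:proposition} can be embedded, in polynomial time, into a single step of the iterative problem in Equation~\eqref{eq:opt_heterogeneous_iterative}. The point is to check that the extra constraints of Equation~\eqref{eq:opt_heterogeneous_iterative} do not shrink the feasible set below what Partition needs. These extra constraints are the flow-conservation equalities linking $X^{k+1}$ to $X^k$ and $Y^{k+1}$, the requirement $y^{k+1}_{r,i\to j}\le x^k_{r,i}$, and the Hamilton bound $y^{k+1}_{r,i\to j}\le h^{k+1}_{r,i\to j}$.

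The construction is as follows. Take $M=2$ teams connected by one undirected edge, and allow transfers in both directions, $1\to 2$ and $2\to 1$. Give each robot $r$ the scalar capability $a_r$. Set $w_1=w_2=1$ and $\mathcal{F}_v(\mathcal{S}_v)=-\bigl|\sum_{r\in\mathcal{S}_v}a_r-A/2\bigr|$. Choose $\lambda=0$, or equivalently set all transfer costs $c_{r,v}\equiv 0$, so that $\mathcal{C}$ plays no role.

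For the initial assignment $X^k$, put robot $1$ in team $1$ and all other robots in team $2$. This satisfies the constraint that every team is nonempty, provided $N\ge 2$.

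Next we need every candidate transfer to be Hamilton-admissible, i.e., $h^{k+1}_{r,i\to j}=1$ for all $r$ and both directions. In the proof of Proposition~\ref{pr:proposition} this was simply assumed. Here it is not automatic, because $h$ is defined through $\mathcal{F}$ in Equation~\eqref{eq:hamilton_indicator}. This is the main obstacle. The plan is to keep $\mathcal{G}$ as the Partition objective, but modify the data that feed the Hamilton test. If the formulation is read as allowing $h$ to be specified as instance data, as the Proposition's proof does, the reduction goes through directly. If not, we perturb the relatedness: $r_{ij}=w_j/w_i$ equals $1$, so we replace it by scaling the weights through an auxiliary mechanism. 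The cleanest option we expect to use is to add to each $\mathcal{F}_v$ a large robot-independent term that makes every marginal benefit exceed every marginal cost. The difficulty is that such a term must depend on the sets. We can instead add $K\cdot\mathbb{I}\{\mathcal{S}_v\neq\emptyset\}$-type constants. These do not change $\mathcal{G}$ on feasible assignments, because both teams are always nonempty. They do, however, inflate the benefit $B_{r,j}$ whenever $j$ is nonempty and keep the cost $C_{r,i}$ bounded. If this fails to make all $h$ equal to $1$, we fall back on the restriction argument used at the end of the Proposition's proof: Equation~\eqref{eq:opt_heterogeneous_iterative} is a generalization in which $h$ is an input.

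Once every $h$ equals $1$, feasibility is easy to see. For any target partition $(\mathcal{S}_1,\mathcal{S}_2)$ with both parts nonempty, set $y^{k+1}_{r,\mathrm{cur}(r)\to v}=1$ exactly when robot $r$ should end in team $v\neq\mathrm{cur}(r)$. This satisfies flow conservation and $y\le x^k$, so every nonempty two-part assignment is reachable in one step. Partitions with an empty side are never optimal for Partition when $A>0$, so excluding them loses nothing. Hence the optimal value of Equation~\eqref{eq:opt_heterogeneous_iterative} equals $\max_S -2|S-A/2|$, which is $0$ if and only if the Partition instance is a yes-instance. The construction is polynomial in $N$, so Equation~\eqref{eq:opt_heterogeneous_iterative} is NP-hard.
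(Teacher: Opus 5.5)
Your construction (a one-step Partition instance with $M=2$, checking flow conservation, $y^{k+1}_{r,i\to j}\le x^k_{r,i}$, and nonemptiness) is the same restriction argument as the paper's proof. You are also right that the paper's ``by allowing all transfers'' is not free, since $h^{k+1}_{r,i\to j}$ is \emph{defined} by Equation~\eqref{eq:hamilton_indicator}. However, your step meant to force $h\equiv 1$ fails.

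\textbf{The proposed fix fails.} Adding $K\,\mathbb{I}\{\mathcal{S}_v\neq\emptyset\}$ leaves $B_{r,j}(\mathcal{S}_j^k)$ unchanged whenever $\mathcal{S}_j^k\neq\emptyset$, because both $\mathcal{S}_j^k\cup\{r\}$ and $\mathcal{S}_j^k$ are nonempty and $K$ cancels. Since every team is nonempty at $X^k$, its only effect is to raise $C_{r,i}(\mathcal{S}_i^k)$ by $K$ when $\mathcal{S}_i^k=\{r\}$, which can only destroy admissibility.

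The obstruction is structural. Multiplying $r_{ij}B_{r,j}>C_{r,i}$ by $w_i$ shows that $h^{k+1}_{r,i\to j}=1$ iff moving $r$ alone from $i$ to $j$ strictly increases $\mathcal{G}$. In your instance, for $r\neq 1$, this gives $h^{k+1}_{r,2\to 1}=1$ iff $|a_1+a_r-A/2|<|a_1-A/2|$, i.e.\ iff $a_r<A-2a_1$. For $N\ge 3$ both assignments being compared are feasible. So no modification of $\mathcal{F}_v$ that preserves $\mathcal{G}$ on feasible assignments can change this, and $h\equiv 1$ is false whenever some $a_r\ge A-2a_1$. Your fallback of treating $h$ as free input is exactly what the paper does, but it sets aside the definition of $h$ rather than resolving the obstacle you raised.

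\textbf{The repair.} You never need $h\equiv 1$, only that the moves leading to a perfect partition are admissible, and your construction already guarantees this.
\begin{itemize}
\item If no perfect partition exists, every feasible $X^{k+1}$ has $\mathcal{G}<0$.
\item If one exists, robot~$1$'s side sums to $A/2$, so $a_1\le A/2$. If $a_1=A/2$, staying ($Y^{k+1}=0$, always feasible) already gives $\mathcal{G}=0$.
\item Otherwise let $T$ be the robots other than robot~$1$ on robot~$1$'s side. Each $r\in T$ satisfies $0<a_r\le A/2-a_1<A-2a_1$, so $h^{k+1}_{r,2\to1}=1$. Setting $y^{k+1}_{r,2\to1}=1$ exactly for $r\in T$ is feasible, because team~$2$ keeps the other side, whose sum $A/2>0$ makes it nonempty. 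This attains $\mathcal{G}=0$.
\end{itemize}
Hence the one-step optimum is $0$ iff the Partition instance is a yes-instance, with $h$ computed from Equation~\eqref{eq:hamilton_indicator} as the formulation requires.

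A cleaner variant: put one dummy robot with $a=0$ in each team and all item robots in team~$2$. Every item move $2\to1$ is admissible because $0<a_r<A$, dummy moves are not, and the step is exactly subset-sum to $A/2$. Either version gives a self-contained reduction that closes the gap left open by both your fallback and the paper's proof.
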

\begin{proof}
    The iterative formulation in Equation~\eqref{eq:opt_heterogeneous_iterative} restricts moves to edge-admissible transfers via variables \( \{y^{k+1}_{r,i\to j}\} \), as per Equation~\eqref{eq:binary_transfer}. A naive evaluation of all candidate robot-edge transfers involves up to \( N|E| \) binary variables, and computing all Hamilton-admissible indicators \( h^{k+1}_{r,i\to j} \) from Equation~\eqref{eq:hamilton_indicator} may require evaluating marginal changes in \( \mathcal{F}_i \) and \( \mathcal{F}_j \) for each candidate transfer. Consequently, without additional structure in \( \mathcal{F}_v(\cdot) \), the per-iteration cost can scale on the order of \( O(N|E|) \) evaluations in the worst case.

    Furthermore, Proposition~\ref{pr:proposition} establishes that the general heterogeneous allocation problem is NP-hard. The iterative formulation corresponds to solving this problem over a restricted set of feasible transitions between successive allocations. Since the centralized problem can be recovered as a special case of the iterative formulation (e.g., by allowing all transfers and considering a single-step transition), the iterative problem is at least as hard. Hence, Equation~\eqref{eq:opt_heterogeneous_iterative} remains NP-hard in general.
\end{proof}

In summary, while the homogeneous problem already exhibits combinatorial growth, its separable structure allows the algorithm in \cite{karam2025resource} to reliably find a global optimum for small scale problems. The heterogeneous assignment problem, however, strictly generalizes this setting, has a feasible set that grows exponentially with \( N \) (up to \( M^N \)), and is NP-hard as per Proposition~\ref{pr:proposition}, motivating the learning-based approach developed in the next section.

\section{Learning Altruistic Collaboration} \label{sec:learning}

Given the NP-hardness of the heterogeneous allocation problem in Equations~\eqref{eq:opt_heterogeneous_centralized} and~\eqref{eq:opt_heterogeneous_iterative}, we seek a scalable approximation that can be executed in real time for solving Problem~\ref{pl:problem}. To this end, the problem is formulated as a structured prediction task over a graph and learn a policy using a GNN under CTDE. The learned policy maps the current multi-team state to robot-level transfer decisions that approximate the solution of the one-step optimization problem. In this section, we go over the process of generating synthetic data as well as presenting the GNN architecture to be used to train a model predicting the output of Equation~\eqref{eq:opt_heterogeneous_iterative}.

\subsection{Synthetic Data Generation} \label{ssec:generation}

To train the GNN policy, we require labeled data corresponding to one-step collaboration decisions under the heterogeneous altruistic optimization problem in Equation~\eqref{eq:opt_heterogeneous_iterative}. Since exact optimization is intractable at large scale, \textit{small-scale} synthetic problem instances are generated for which the exact one-step solution can be computed via exhaustive search. These exact solutions serve as supervisory labels during training.

Each generated instance defines a supervised learning sample of the form \( (\Gamma, Y) \), where \( \Gamma \) encodes the current system state as a graph and \( Y \) specifies the optimal next assignment for all robots. The learning objective is to approximate the mapping from the current state to the optimal reassignment under the heterogeneous altruistic objective. Each synthetic instance is defined by a team interaction graph, team-specific mission evaluation functions, a heterogeneous robot population, and an initial assignment. The resulting state is encoded as a graph, and the corresponding label is the optimal next-team decision for each robot.

\begin{algorithm}[!b]
    \caption{Synthetic One-Step Data Generation}
    \label{alg:data_generation}
    \begin{algorithmic}[1]
        \STATE \textbf{Input:} number of samples \( N_{\mathrm{data}} \), team and robot ranges, model parameter \( \lambda \)
        \STATE \textbf{Output:} supervised dataset \( D \)
        \STATE Initialize \( D \leftarrow \emptyset \)
        \FOR{\( n = 1, \dots, N_{\mathrm{data}} \)}
            \STATE Sample an instance size \( (M, N) \)
            \STATE Sample a connected team interaction graph \( G = (V, E) \)
            \STATE Sample team positions and corresponding team regions \( \{ \mathcal{D}_v \}_{v \in V} \)
            \STATE Compute the pairwise team-distance lookup table
            \STATE Sample team weights \( \{w_v\}_{v\in V} \)
            \FOR{each team \( v \in V \)}
                \STATE Generate and sample mission-specific team parameters
            \ENDFOR
            \STATE Sample \( N \) heterogeneous robots with corresponding sampled capability vectors
            \FOR{each robot \( r \in \mathcal{R} \)}
                \STATE Generate and sample mission-specific robot parameters
            \ENDFOR
            \STATE Sample an initial assignment \( X^k \)
            \STATE Compute the Hamilton-admissible mask \( h^{k+1}_{r,i\to j}\)
            \STATE Enumerate all feasible next assignments \( X^{k+1} \) consistent with \( h^{k+1}_{r, i \to j} \)
            \STATE Evaluate each feasible assignment using
            \[
                \mathcal{G}(X^{k+1}) - \lambda \mathcal{C}(X^k, X^{k+1})
            \]
            \STATE Let \( X^{\star}\) be the maximizing assignment
            \STATE Construct graph features from \( X^k \) and label the sample with \(X^{\star} \)
            \STATE Append the resulting graph-label pair to \( D \)
        \ENDFOR
        \STATE \textbf{return} \( D \)
    \end{algorithmic}
\end{algorithm}

Given \( M \), an undirected and connected graph \( G = (V,E) \) is sampled. Connectivity is ensured by first constructing a spanning tree, after which additional edges are inserted independently with a certain determined probability. Each team \( v \in V \) is assigned a position in a shared workspace, with a minimum separation constraint to ensure spatial distinction. A region \( \mathcal{D}_v \), representing the region of interest, is assigned to each team to represent its operational area. Pairwise distances between team positions are precomputed and stored in a lookup table, which is later used to evaluate inter-team travel distances, i.e., \( d_{r, \, \mathrm{cur}(r) \to v} \).

Next, a heterogeneous population of \( N \) robots is sampled. Each robot is assigned its capability vector with a certain probability distribution of the different capabilities. An initial assignment \( X^0 \) is then sampled subject to the feasibility constraints.

Given the current state, we compute the Hamilton-admissible mask \( h^{k+1}_{r,i\to j} \), which defines the feasible destination set for each robot. We then enumerate all feasible next assignments that are consistent with this mask and the feasibility constraints. Among these, the assignment \( X^{\star} \) that maximizes $ \mathcal{G}(X^{k+1}) - \lambda \mathcal{C}(X^k, X^{k+1})$ is selected, as defined in Equation~\eqref{eq:opt_heterogeneous_iterative}. The supervisory label is represented as a categorical assignment for each robot, indicating its destination team at the next iteration.

Each instance is then encoded as a graph for training. Team features include the team weight, number of robots, number of robots under each capability type, mission-specific parameters, and mission value. Robot features include capability indicators and mission-specific parameters. Edge features include relative team position, pairwise distance, the mission-weight ratio \( w_j / w_i \), and the adjacency indicator. In addition, each robot is associated with a Hamilton mask, a candidate destination mask, and a travel-distance matrix. The target label specifies the optimal destination team for each robot.

Algorithm~\ref{alg:data_generation} summarizes the overall data-generation procedure. In general, \( N_{\mathrm{data}} \) is generated with such graph-label pairs, which are used to train the GNN policy presented in the next section. This dataset enables the model to learn a mapping from local graph-structured information to globally consistent transfer decisions, thereby approximating the solution of the NP-hard allocation problem of Equations~\eqref{eq:opt_heterogeneous_centralized}~and~\eqref{eq:opt_heterogeneous_iterative} in a manner suitable for CTDE.

\subsection{GNN Architecture} \label{ssec:gnn_arch}

\begin{figure*}[!t]
    \centering
    \includegraphics[width=2\columnwidth]{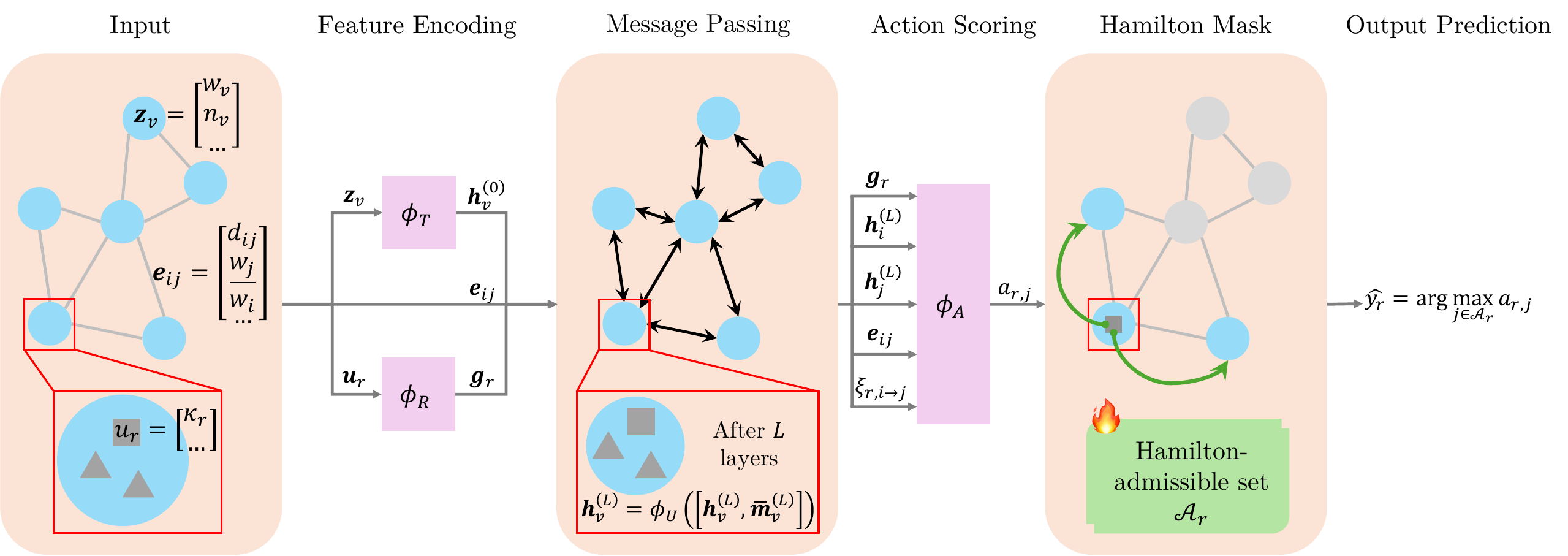}
    \caption{Overview of the proposed GNN-based policy for heterogeneous altruistic collaboration with an example graph input \( G = (V, E) \). The current system state is encoded with node features \( \mathbf{z}_v \), edge features \( \mathbf{e}_{ij} \), and robot features \( \mathbf{u}_r \). Feature encoders map these inputs to latent embeddings, after which message passing over the team interaction graph produces context-aware team representations. For each robot \( r \), candidate transfers \( (r, i \to j) \) are evaluated using an action-scoring function conditioned on the robot embedding, source and destination team embeddings, edge features, and a robot-dependent transfer descriptor \( \xi_{r,i \to j} \). A Hamilton-admissible mask restricts the action set to feasible transfers, and the final decision is obtained via a masked maximization over the admissible set. The MLPs are represented with the pink squares and rectangle, and the fire emoji highlights the altruistic decision-maker.
}
    \label{fig:gnn}
\end{figure*}

In this section, we introduce a policy that approximates the one-step optimal reassignment while remaining computationally efficient at execution time. The problem is formulated as a masked multi-class prediction problem over a graph and train a GNN to map the current heterogeneous multi-team state to robot-level transfer decisions leading to the next robot-to-team assignment.

At iteration \( k \), the system state is represented by a graph
\[
    \mathcal{H}^k = \big( G, \mathcal{Z}^k, \mathcal{U}^k, \mathcal{E}^k \big),
\]
where \( G \) is the team interaction graph, \( \mathcal{Z}^k = \{ \mathbf{z}_v^k \}_{v \in V} \) denotes team features, \( \mathcal{U}^k = \{ \mathbf{u}_r^k \}_{r \in \mathcal{R}} \) denotes robot features, and \( \mathcal{E}^k = \{ \mathbf{e}_{ij}^k \}_{(i,j) \in V \times V} \) denotes directed edge features. The team features \( \mathbf{z}_v^k \) summarize the current state of team \( v \), including its weight \( w_v \), current team size \( n_v^k \), mission-specific parameters, and any additional quantities required to evaluate \( \mathcal{F}_v(\mathcal{S}_v^k) \). The robot features \( \mathbf{u}_r^k \) encode the capability vector \( \mathbf{\kappa}_r \), robot-specific parameters, and any state variables relevant to transfer decisions. The edge features \( \mathbf{e}_{ij}^k \) encode pairwise information between teams, such as adjacency, relative Euclidean distance, pairwise transfer cost descriptors, and the weight ratio \( w_j / w_i \).

For each robot \( r \), the action space is restricted by the Hamilton-admissible indicator. Specifically, given the current team \( \mathrm{cur}(r) \), the feasible destination set is
\[
    \mathcal{A}_r^k = \{\mathrm{cur}(r)\} \cup
    \left\{
        j \in V \; \middle | \;
        h^{k + 1}_{r, \mathrm{cur}(r) \to j}=1
    \right\},
\]
which includes the option of staying in the current team together with all Hamilton-admissible destination teams. The learning problem is therefore to predict, for each robot \( r \), its next team \( \mathrm{next}(r) \in \mathcal{A}_r^k \).

The proposed GNN consists of three stages:
\begin{enumerate}
    \item feature encoding,
    \item team-level message passing, and
    \item robot-level action scoring.
\end{enumerate}
First, team and robot features are embedded into a common latent space using learnable encoders,
\[
    \mathbf{h}_v^{(0)} = \phi_T(\mathbf{z}_v^k),
    \qquad
    \mathbf{g}_r = \phi_R(\mathbf{u}_r^k),
\]
where \( \phi_T \) and \( \phi_R \) are multilayer perceptrons (MLPs). Next, information is propagated over the team interaction graph through \( L \) rounds of message passing. At layer \( \ell \), the message from team \( i \) to team \( j \) is
\[
    \mathbf{m}_{i\to j}^{(\ell)}
    =
    \phi_m \! \left(
        \left[
            \mathbf{h}_i^{(\ell)},
            \mathbf{h}_j^{(\ell)},
            \mathbf{e}_{ij}^k
        \right]
    \right),
\]
where \( \phi_m \) is a learnable message function. Messages are then aggregated at each team \( j \) over its neighbors \( \mathcal{N}_j \),
\[
    \bar{\mathbf{m}}_j^{(\ell)}
    =
    \frac{1}{|\mathcal{N}_j|}
    \sum_{i \in \mathcal{N}_j}
    \mathbf{m}_{i \to j}^{(\ell)},
\]
and the team embedding is updated according to
\[
    \mathbf{h}_j^{(\ell+1)}
    =
    \phi_U\!\left(
        \left[
            \mathbf{h}_j^{(\ell)},
            \bar{\mathbf{m}}_j^{(\ell)}
        \right]
    \right),
\]
where \( \phi_U \) is a learnable update map. After \( L \) layers, each team embedding \( \mathbf{h}_v^{(L)} \) summarizes both local team information and contextual information from neighboring teams.

To evaluate candidate transfers, the policy scores each robot-destination pair \( (r, j) \) with \( j \in \mathcal{A}_r^k \). Let \( i = \mathrm{cur}(r) \); the action score is computed as follows
\[
    a_{r, j}
    =
    \phi_A \! \left(
        \left[
            \mathbf{g}_r, \,
            \mathbf{h}_i^{(L)}, \,
            \mathbf{h}_j^{(L)}, \,
            \mathbf{e}_{ij}^k, \,
            \xi_{r,i \to j}^k
        \right]
    \right),
\]
where \( \phi_A \) is an action-scoring MLP and \( \xi_{r, i \to j}^k \) denotes any robot-dependent transfer descriptor, such as a cost or distance feature.

The predicted next team of robot \(r\) is then given by
\[
    \hat{y}_r^k = \arg\max_{j \in \mathcal{A}_r^k} a_{r, j}.
\]
Equivalently, infeasible actions are masked before the maximization, ensuring that the policy only outputs either Hamilton-admissible decisions or the ``no-move" decision. Training is performed using the exact labels produced by Algorithm~\ref{alg:data_generation} in Section~\ref{ssec:generation}. The overall architecture of the proposed GNN-based policy is illustrated in Figure~\ref{fig:gnn}, which summarizes the encoding of the system state, message passing over the team interaction graph, and the robot-level action scoring with Hamilton-admissible masking. If \( y_r^\star \) denotes the optimal destination team of robot \( r \) in the exact one-step solution \( X^\star \), then the policy parameters are learned by minimizing a masked cross-entropy loss,
\begin{equation} \label{eq:loss_function}
    \mathcal{L}
    =
    - \frac{1}{|\mathcal{R}|}
    \sum_{r \in \mathcal{R}}
    \log \frac{\exp(a_{r, y_r^\star})}{\sum_{j \in \mathcal{A}_r^k} \exp(a_{r,j})}.
\end{equation}
This loss penalizes the model only over feasible actions, thereby aligning the learning objective with the constrained decision structure of Equation~\eqref{eq:opt_heterogeneous_iterative}.

The resulting policy follows a CTDE paradigm, where the model is trained offline using globally generated supervisory labels, while at execution time each team exchanges messages only with its one-hop neighbors, as shown in the next section, and computes robot transfer scores using robot features, source-team and destination-team embeddings, and local edge descriptors. Thus, the learned policy provides a fast approximation of the one-step optimizer without solving the full combinatorial optimization problem online, which will then be able to solve Problem~\ref{pl:problem}. Note that the abstract feature definitions above are intentionally mission-agnostic; in any specific application, they are instantiated by choosing team, robot, and edge descriptors consistent with the mission evaluation function \( \mathcal{F}_v(\mathcal{S}_v) \) and transfer cost \( \mathcal{C}(\cdot) \).

\subsection{GNN Inference} \label{ssec:gnn_inference}

\begin{algorithm}[!b]
    \caption{GNN-Guided Inference}
    \label{alg:gnn_inference}
    \begin{algorithmic}[1]
        \STATE \textbf{Input:} current assignment \( X^k \), trained GNN policy, maximum number of iterations
        \STATE \textbf{Output:} updated assignment \( X^{k+1} \)
        \STATE Initialize \( X^{k+1} \leftarrow X^k \)
        \FOR{each step}
            \STATE Construct graph state \( \mathcal{H} \) from \( X^{k+1} \)
            \STATE Compute Hamilton-feasible destination sets \( \{\mathcal{A}_r \}_{r \in \mathcal{R}} \)
            \STATE Run the GNN with one-hop message passing on \( \mathcal{H} \)
            \FOR{each robot \( r \in \mathcal{R} \)}
                \STATE Predict \( \hat{y}_r = \arg\max_{j \in \mathcal{A}_r} a_{r,j} \)
            \ENDFOR
            \FOR{each team \( i \in V \)}
                \STATE Construct the proposed outgoing transfer set \( \mathcal{P}_i^k \)
                \STATE Sort \( \mathcal{P}_i^k \) by decreasing score
                \FOR{each transfer in \( \mathcal{P}_i^k \)}
                    \IF{executing the transfer preserves local feasibility}
                        \STATE Accept the transfer and update \( X \)
                    \ENDIF
                \ENDFOR
            \ENDFOR
            \IF{no transfer was accepted}
                \STATE \textbf{break}
            \ENDIF
        \ENDFOR
        \STATE \textbf{return} \( X^{k+1} \)
    \end{algorithmic}
\end{algorithm}

Once trained, the GNN is used at inference time as a decentralized decision policy inside an iterative collaboration loop. The goal is to produce a feasible sequence of robot transfers without exhaustively solving Equation~\eqref{eq:opt_heterogeneous_iterative} online. At each decision step, the current system state is encoded as a graph, processed by the learned policy, and then converted into executable transfers through a local feasibility mask.

Given the current assignment \( X^k \), the graph representation \( \mathcal{H}^k \) described in Section~\ref{ssec:gnn_arch} is first constructed. The GNN then performs one round of message passing over the team interaction graph, so that each team exchanges latent information only with its one-hop neighbors. For each robot \( r \in \mathcal{R} \), the policy computes action scores \( a_{r,j} \) over the feasible destination set \( \mathcal{A}_r^k \) and predicts the destination \( \hat{y}_r^k \). The corresponding score \( a_{r,\hat{y}_r^k} \) is used to rank the proposed transfer.

Because robot decisions are predicted in parallel, the joint proposal may contain simultaneous transfers that cannot all be executed together while preserving local team feasibility. We therefore interpret the GNN output as a set of candidate transfers and apply a local selection rule. Specifically, for each team \( i \in V \), all proposed outgoing transfers are collected
\[
    \mathcal{P}_i^k =
    \left\{
        \left( r, i \to \hat{y}_r^k, a_{r,\hat{y}_r^k} \right)
        \; \middle| \; \mathrm{cur}(r) = i, \; \hat{y}_r^k \neq i
    \right\},
\]
sorted in descending order of score, and processed sequentially. A proposed transfer is accepted if executing it preserves the hard local constraints of the source team. Otherwise, it is discarded. In the general formulation, these feasibility checks include the assignment constraint and the requirement that each team remain nonempty; additional mission-specific local constraints may also be enforced.

After all teams have processed their proposed outgoing transfers, the accepted moves define the next assignment \( X^{k+1} \). The procedure is then repeated using the updated graph state. The iterative altruistic collaboration terminates when no team accepts any additional transfer, indicating that the learned policy does not propose any further locally feasible reallocation under the current Hamilton-admissible structure. Algorithm~\ref{alg:gnn_inference} summarizes the inference-time procedure.

\section{Fire-Fighting Scenario} \label{sec:application}

In order to instantiate Problem~\ref{pl:problem} and train a GNN model, we need to apply the collaboration framework from Section~\ref{ssec:heterogeneous} into a mission-specific setting. We choose to apply it towards fire-fighting missions in different geographical locations where team-to-team migration of robots can still happen. In this section, we define how the different components of the framework will be tailored towards this application.

For this application, two types of robots are considered: \emph{sensing} robots, and \emph{fire-fighting} robots (i.e., water-carrying robots); hence, \( Q = 2 \). Therefore, we get the following capability vector:
\begin{equation} \label{eq:capability}
    \mathbf{\kappa}_r = [S_r \quad W_r] \in \{0, 1\}^2,
\end{equation}
where 
\[
    S_r = 
    \begin{cases}
        1, & \text{if robot \( r \in \mathcal R \) is a \emph{sensing} robot} \\
        0, & \text{otherwise}
    \end{cases}
\]
and
\[
    W_r = 
    \begin{cases}
        1, & \text{if robot \( r \in \mathcal R \) is a \emph{fire-fighting} robot} \\
        0, & \text{otherwise}.
    \end{cases}
\]
Each robot \( r \in \mathcal{R} \) is assigned exactly one of the two capabilities. Team heterogeneity can then be shown through the number of robots under each capability given $N_{W,v}^k = \sum_{r\in \mathcal{S}_v^k} W_r$ and $N_{S,v}^k = \sum_{r \in \mathcal{S}_v^k} S_r$.

In the general iterative heterogeneous collaboration model (Equation~\eqref{eq:opt_heterogeneous_iterative}), we only enforce that each team maintains at least one robot. For this application, we require that this robot be a \textit{sensing} robot; i.e., each team must retain at least one sensing robot at every discrete decision iteration. This application-specific feasibility constraint is enforced in the allocation decision variables (and is not assumed in the mission-agnostic formulation). Thus, the constraint $\sum_{r \in \mathcal{R}} S_r \, x_{r, v} \ge 1, \,\, \forall v \in V$ is added to the optimization problem.

Each robot type contributes differently to the success of the team mission. We define the role of the sensing robots through optimal area coverage, for which a common approach is the Voronoi-based coverage control framework introduced in~\cite{cortes2004coverage}. Specifically, we consider the deployment of \( N_{S,v}^k \in \mathbb{Z}^+ \) sensing robots in team \( v \) over a domain of interest \( \mathcal{D}_v \subset \mathbb{R}^d \), associated with a density function \( \phi_v(q): \mathcal{D}_v \rightarrow \mathbb{R}^+ \) that captures the relative importance of points in \( \mathcal{D}_v \). Assuming that coverage effectiveness decreases as a robot becomes farther from a point \( q \in \mathcal{D}_v \), the locational cost function for team \( v \in V \) can be written as~\cite{cortes2004coverage}
\begin{equation}
    L_v(p) = \sum_{i \in \mathcal{N}} \int_{\mathcal{V}_i}\! \|p_i - q\|^2 \phi_v(q) \,dq,
    \label{eq:lcost}
\end{equation}
where \( p_i \in \mathbb{R}^d \) denotes the position of robot \( i \), and the vector \( p = [p_1^\top, p_2^\top,\dots, p_{N_{S,v}^k}^\top]^\top \) contains the positions of all sensing robots in team \( v \). The index set \( \mathcal{N} = \{1, \dots, N_{S,v}^k\} \) corresponds to the sensing robots after local relabeling within the team, and
\begin{align} \label{voronoi}
    \mathcal{V}_i &= \left\{ q \in \mathcal{D}
    \,\middle|\,
    \lVert q - p_i \rVert \leq \lVert q - p_j \rVert,\;\forall j \neq i \in \mathcal{N}
    \right\}
\end{align}
defines the Voronoi cell associated with robot \( i \in \mathcal{N} \). From Equation~\eqref{voronoi}, it follows that \( \cup_{i \in \mathcal{N}} \mathcal{V}_i = \mathcal{D} \), while the Lebesgue measure satisfies \( \lambda(\cap_{i \in \mathcal{N}} \mathcal{V}_i) = 0 \). A standard approach for obtaining an optimal coverage configuration, e.g.,~\cite{cortes2004coverage}, is to let each robot \( i \) move according to the negative gradient flow of \( L_v(p) \) with respect to \( p_i \), \( \forall i \in \mathcal{N} \). This corresponds to a continuous-time version of Lloyd’s algorithm and asymptotically converges to a centroidal Voronoi tessellation (CVT), given by
\begin{equation} \nonumber
    p_i = c_i = \frac{\int_{\mathcal{V}_i} \! q \phi_v(q) \,dq}{\int_{\mathcal{V}_i} \! \phi_v(q) \,dq}, \quad \forall i \in \mathcal{N},
\end{equation}
where \( c_i \in \mathbb{R}^d \) denotes the centroid of the Voronoi cell of robot \( i \), and the resulting CVT configuration is represented as \( c = [c_1^\top, c_2^\top,\dots, c_{N_{S,v}^k}^\top]^\top \). Finally, we explicitly include the number of sensing robots \( N_{S,v}^k \) in team \( v \) as an argument of the locational cost function in Equation~\eqref{eq:lcost}, yielding
\begin{equation} \label{eq:lcost_cdc}
    L_v(N_{S,v}^k, p) = \sum_{i=1}^{N_{S,v}^k} \int_{\mathcal{V}_i}\! \|p_i - q\|^2 \phi_v^k(q)\,dq.
\end{equation}

In addition, the fire density, encoded by \( \phi_v(q) \), is considered to be piecewise constant; so, we also introduce and include \( \phi_v^k(q) \), being the fire density at iteration \( k \) of team \( v\) , to be an argument of the locational cost function~\eqref{eq:lcost_cdc}, i.e.,
\begin{equation} \label{eq:lcost_new}
    L_v(N_{S,v}^k, p, \phi_v^k(q)) = \sum_{i=1}^{N_{S,v}^k} \int_{\mathcal{V}_i}\! \|p_i - q\|^2 \phi_v^k(q) \, dq.
\end{equation}
However, since \( L_v \) is in function of \( N_{S,v}^k \), it is modified to be
\begin{equation} \label{eq:lcost_new_new}
    L_v = 
    \begin{cases}
        0, & \text{if} \,\, N_{S,v}^k = 0, \\
        \sum_{i=1}^{N_{S,v}^k} \int_{\mathcal{V}_i}\! \|p_i - q\|^2 \phi_v^k(q) \, dq, & \text{otherwise},
    \end{cases}
\end{equation}
since the locational cost in Equation~\eqref{eq:lcost_new} is not defined at \( N_{S,v}^k = 0 \).

We wish to encode the sensing contributions to be in \( [0, 1] \); hence, we are motivated to wrap the locational cost in Equation~\eqref{eq:lcost_new_new} in a sigmoid function. The general form of the sigmoid function is defined as
\begin{equation} \label{eq:sig}
    \sigma (x) = \frac{1}{1+e^{- a (x - b)}}
\end{equation}
where \( b \) is the value of $x$ at which \( \sigma (x) = 0.5 \).
Finally, we say that the contributions of the sensing robots for each team \( v \) is given by
\begin{equation} \label{eq:sensing_func}
    \psi_v(N_{S,v}^k) =
    \begin{cases}
        0, & \text{if} \,\, N_{S,v}^k = 0, \\
        \frac{1}{1+e^{-a(1/{L_v} - b)}}, & \text{otherwise}
    \end{cases}
\end{equation}
with \( a = 1 \) and \( b = 0 \). We take \( x \) from \eqref{eq:sig} to be \( x = 1/{L_v(N_{S,v}^k, p, \phi_v^k(q))} \) in \eqref{eq:sensing_func}; this choice maps lower locational cost (better coverage) to higher sensing effectiveness, and empirically exhibits diminishing returns as \( N_{S,v}^k \) increases.

\begin{table}[!t]
    \centering
    \renewcommand{\arraystretch}{1.5}
    \caption{Fire-Fighting Model Components}
    \label{tab:fire_model}
    \begin{tabular}{c|l}
        \hline
        \textbf{Component} & \textbf{Description} \\
        \hline
        \( S_r \) & Sensing capability indicator of robot \( r \) \\
        \hline
        \( W_r \) & Fire-fighting capability indicator of robot \( r \) \\
        \hline
        \( N_{S,v}^k \) & Number of sensing robots in team \( v \) at iteration \( k \) \\
        \hline
        \( N_{W,v}^k \) & Number of fire-fighting robots in team \( v \) at iteration \( k \) \\
        \hline
        \( \phi_v^k(q) \) & Fire density at location \( q \) for team \( v \) at iteration \( k \) \\
        \hline
        \( \mathcal{D}_v \) & Spatial domain associated with team \( v \) \\
        \hline
        \( L_v \) & Locational cost for sensing robots (coverage quality) \\
        \hline
        \( \psi_v(N_{S,v}^k) \) & Sensing effectiveness (sigmoid of inverse locational cost) \\
        \hline
        \( K_r \) & Fire-fighting capacity of robot \( r \) \\
        \hline
        \( P_v(N_{W,v}^k) \) & Aggregate fire-fighting power of team \( v \) \\
        \hline
        \( \tau_{r, \mathrm{cur}(r)\to v} \) & Travel time from current team to team \( v \) \\
        \hline
        \( \alpha \) & Cost scaling parameter \\
        \hline
        \( \eta \) & Fire decay rate parameter \\
        \hline
        \( \Delta t \) & Discrete time step \\
        \hline
    \end{tabular}
\end{table}

Having defined the sensing robots' contributions to the team's mission, we can now also present the choice of defining the fire-fighting robots' contributions. Let \( K_r \) be robot \( r \)'s fire-fighting capacity, i.e., encoding the water-carrying and water-dumping capacities of robot \( r \). The contribution of the fire-fighting robots will be as follows
\begin{equation} \label{eq:fire_fight_func}
    P_v(N_{W,v}^k) = \sum_{r \in \mathcal{S}_v^k : W_r = 1} K_r.
\end{equation}

Since the fire density is piecewise constant as mentioned previously, we define the mission evaluation function for each team \( v \in V \) as:
\begin{equation} \label{eq:mission_eval_func_formula_k+1}
    \mathcal{F}_v(\mathcal{S}_v^{k+1}) = - \int_{\mathcal{D}_v} \phi_v^{k+1}(q) \, dq,
\end{equation}
with
\begin{equation} \label{eq:phi_update_rule}
    \phi_v^{k+1}(q) = \phi_v^k(q) \, e^{-\frac{1}{\eta} \, P_v^{k+1}(N_{W,v}^{k+1}) \, \psi_v^{k+1}(N_{S,v}^{k+1}) \, \Delta t},
\end{equation}
where \( \eta \in \mathbb{R}^+ \) represents the speed of the exponential decay and \( \Delta t > 0 \) is the time step between iterations. This formulation captures collaborative capability complementarity (i.e., \cite{karam2026collaboration}) as sensing robots improve situational awareness through coverage, while fire-fighting robots reduce fire intensity. Neither capability alone is sufficient to suppress fires, and effective performance emerges from their joint contribution.

Now, the transfer cost is defined to be as follows:
\begin{equation} \label{eq:transfer_cost_formula}
    \mathcal{C}(X^k, X^{k+1}) = \sum_{r \in \mathcal{R}} \sum_{v \in V} \alpha \, x^{k+1}_{r,v} \, \tau_{r, \mathrm{cur}(r) \to v},
\end{equation}
while instantiating the generic cost function in Equation~\eqref{eq:transfer_cost}, for this application, as
\begin{equation} \nonumber
    c_{r, v}(x^k_{r, v}) = \alpha \, \tau_{r, \mathrm{cur}(r) \to v},
\end{equation}
and with
\begin{equation} \label{eq:travel_time}
    \tau_{r, \mathrm{cur}(r) \to v} = \frac{d_{r, \mathrm{cur}(r) \to v}}{s_{r}}
\end{equation}
representing the time needed for robot \( r \) being considered for transfer, to travel the distance between its current team and team \( v \), given its speed \( s_r \), and \( \alpha \) being a regularization constant with \( \alpha > 0 \).

Table~\ref{tab:fire_model} summarizes the key components of the fire-fighting application of the theoretical heterogeneous altruistic collaborative model presented in Section~\ref{ssec:heterogeneous}.

\section{Results} \label{sec:results}

In this section, the synthetic data generation process in Algorithm~\ref{alg:data_generation} is instantiated for the fire-fighting application introduced in Section~\ref{sec:application}, and the performance of the GNN model described in Section~\ref{ssec:gnn_arch} is trained using the generated data and then evaluated. The supervised learning framework is first assessed in terms of its ability to approximate the optimal one-step allocation decisions under this setting and then the learned policy is evaluated in larger-scale simulations and on a physical robot testbed.

\subsection{Machine Learning Framework} \label{ssec:gnn_perf}
% \textcolor{purple}{[Maybe some of these extra details in this subsection could be put into an Appendix?]}

The supervised learning framework used to train the proposed GNN policy is evaluated on a dataset of \( 18{,}000 \) synthetically generated heterogeneous collaboration instances. Each instance corresponds to a graph-structured multi-team system with \( 3 \) to \( 7 \) teams and a robot population that scales proportionally with the number of teams. Ground-truth labels are obtained using the exact one-step optimizer described in Algorithm~\ref{alg:data_generation}, providing optimal robot-to-team assignment decisions under the heterogeneous altruistic objective.

In the generated dataset, the robot-level decision labels are imbalanced, as across the full dataset, \( 192{,}628 \) decisions correspond to staying with the current team, while \( 41{,}381 \) correspond to moving to another team. Thus, only \( 17.68\% \) of robot-level decisions are transfer actions. This imbalance makes exact target destination of the ``move" prediction substantially harder than the ``stay" classification. However, given the nature of the problem being approximated, the fraction of ``stay" actions is in general going to be larger than the one for the ``move" actions.

The learning problem is formulated as a masked multi-class classification task. For each robot, the model predicts its next team from a feasible set defined by the Hamilton-admissible mask, including the current team (``stay") and admissible transfer destinations. This set also includes all admissible transfer destinations (``move" action(s)). Since the number of teams varies across instances, the number of candidate classes is variable, with a maximum of seven classes in the generated dataset. The Hamilton mask is applied prior to the softmax operation, ensuring that probability mass is assigned only to feasible actions.

\begin{figure}[!t]
    \centering
    \includegraphics[width=0.92\columnwidth]{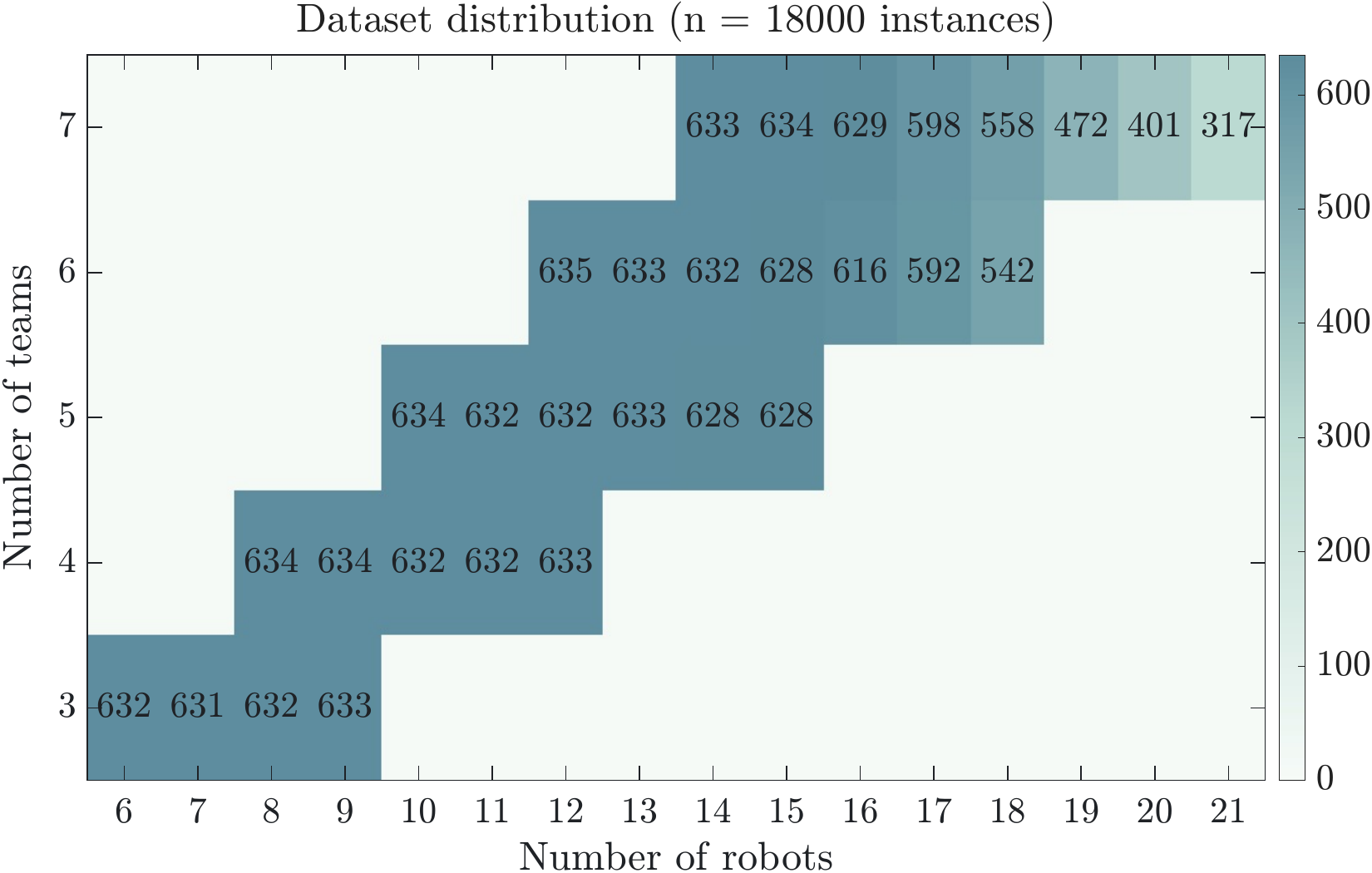}
    \caption{Distribution of the \( 18{,}000 \)-instance dataset over the number
    of teams and robots. The dataset covers heterogeneous collaboration
    problems with \( 3 \) to \( 7 \) teams and robot populations scaled with the
    number of teams.}
    \label{fig:data-dist}
\end{figure}

The GNN operates using one-hop message passing over the team interaction graph, consistent with the decentralized execution setting. Team, robot, and edge features are embedded into a latent space of dimension \( 128 \). The model is trained for \( 50 \) epochs using the AdamW optimizer~\cite{loshchilov2017decoupled} with a learning rate of \( 10^{-3} \), weight decay of \( 10^{-4} \), dropout rate of \( 0.1 \), and batch size of \( 128 \). All input features are normalized using statistics computed from the training set. The training objective is the masked cross-entropy loss defined in Equation~\eqref{eq:loss_function}, augmented with a binary auxiliary loss that predicts whether a robot should move or stay. This auxiliary loss is weighted by \( 0.15 \), and move-target decisions are further emphasized in the cross-entropy objective by a factor of \( 1.25 \), reflecting the increased difficulty of predicting destination teams among transfer actions.

The dataset is partitioned into \( 80\% \) training, \( 10\% \) validation, and \( 10\% \) testing splits, corresponding to \( 14{,}400 \), \( 1{,}800 \), and \( 1{,}800 \) instances, respectively. The split preserves the distribution over problem sizes, ensuring that validation and test performance are evaluated on instances drawn from the same family of heterogeneous collaboration problems. Figure~\ref{fig:data-dist} illustrates the distribution of the dataset across the number of teams and robots. Due to the NP-hardness of the underlying optimization, as per Corollary~\ref{co:corollary}, a timeout was imposed when computing exact labels; instances exceeding this limit were skipped and replaced, resulting in relatively fewer samples for larger team sizes.

Across all splits, the model achieves consistent predictive performance, indicating strong generalization beyond the training data. Four metrics are considered: exact destination accuracy, move/stay accuracy, top-3 accuracy, and move target accuracy. Exact accuracy measures whether the predicted destination matches the optimal assignment, while move/stay accuracy evaluates whether the model correctly predicts whether a robot remains in its current team or transfers. Top-3 accuracy captures whether the optimal destination is among the three highest-scoring feasible actions, and move target accuracy evaluates destination prediction conditioned on transfer actions. The quantitative results for these metrics are shown in Figure~\ref{fig:ml-performance}. The top-3 accuracy of \( 99.55\% \) on the test split indicates that, even when the highest-scoring destination is not exactly correct, the optimal destination is almost always assigned high probability by the model. This suggests that many errors occur among closely ranked feasible destination teams rather than from a complete failure to identify the relevant action set.

\begin{figure}[!t]
    \centering
    \includegraphics[width=\columnwidth]{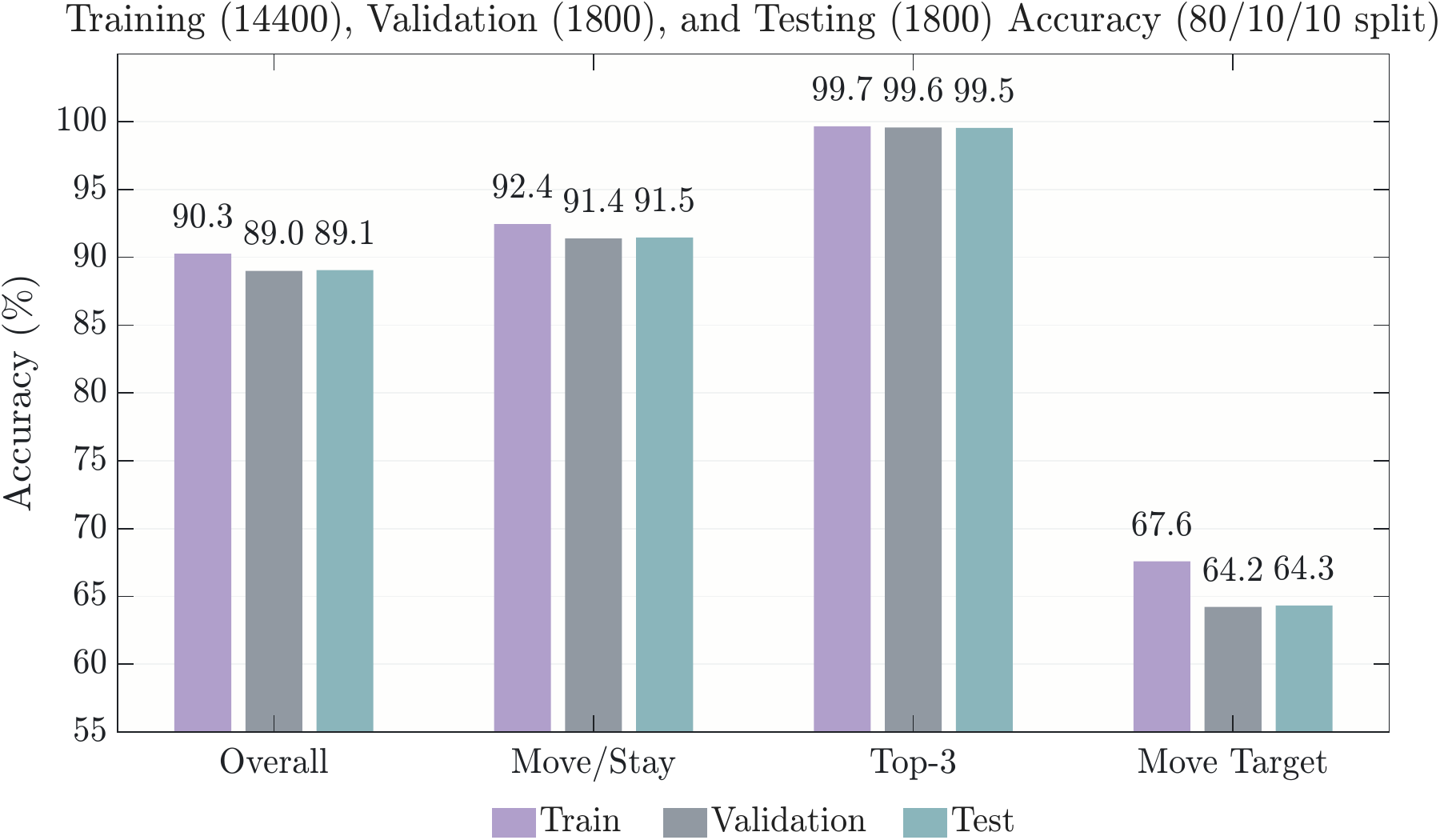}
    \caption{Training, validation, and testing performance of the GNN policy.
    The model achieves similar validation and test accuracy, indicating that
    the learned policy generalizes beyond the training split.}
    \label{fig:ml-performance}
\end{figure}

\begin{figure}[!b]
    \centering
    \includegraphics[width=\columnwidth]{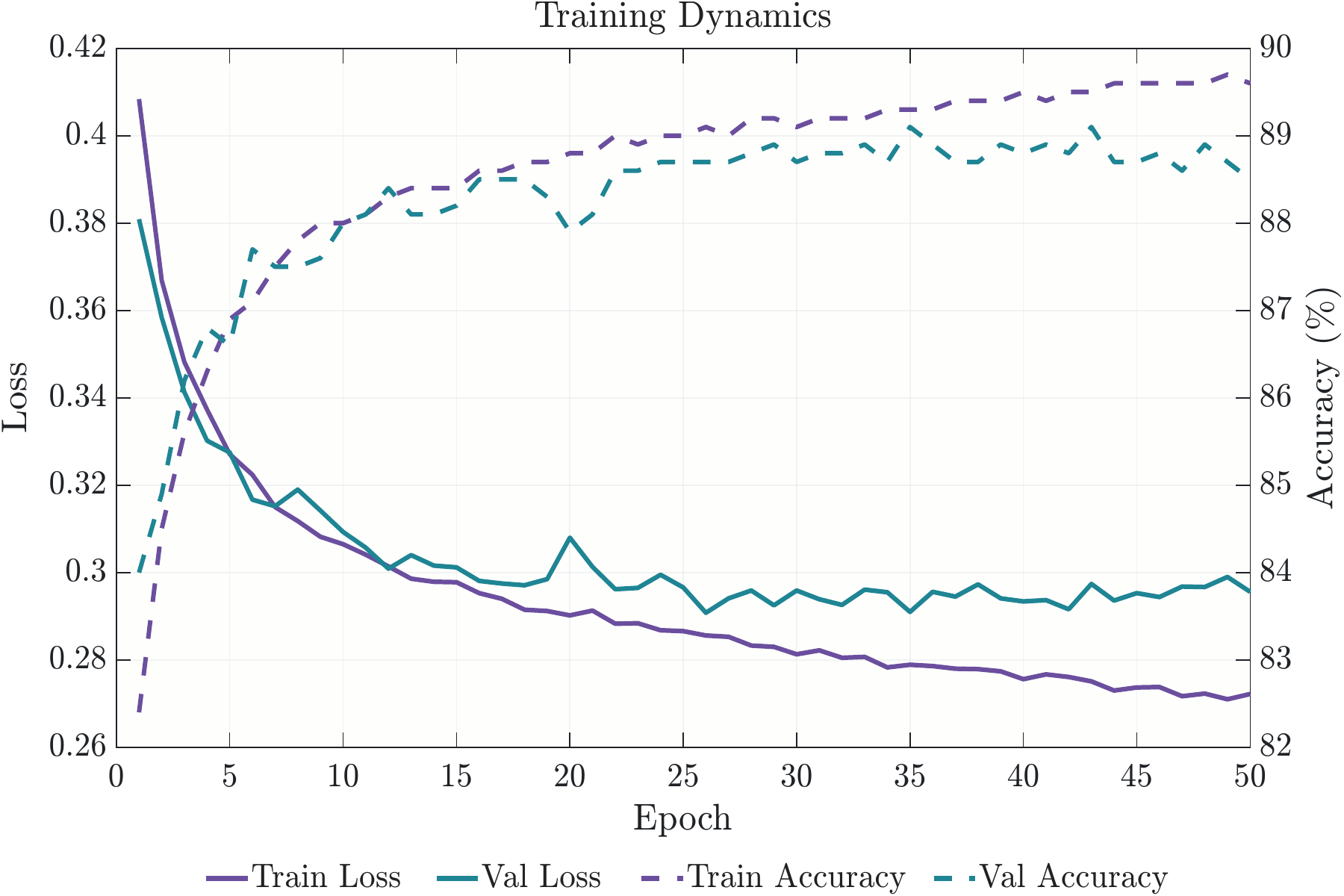}
    \caption{Training dynamics of the GNN policy, showing the model loss and
    exact accuracy for both the training and validation splits.}
    \label{fig:training-dynamics}
\end{figure}

We additionally evaluate move precision and move recall by treating the move-versus-stay decision as a binary classification problem, where the positive class corresponds to transferring a robot to another team. Move precision measures the fraction of predicted transfers that agree with the optimizer, while move recall measures the fraction of optimizer-labeled transfers that are identified by the learned policy. On the test split, the GNN achieves a move precision of \( 74.83\% \) and a move recall of \( 77.94\% \). These values indicate that the model detects most beneficial transfers while maintaining a moderate false-transfer rate. Because transfer decisions are less frequent than ``stay" decisions, move precision and move recall are reported in this section since they are standard classification measures derived from the binary confusion matrix~\cite{sokolova2009systematic} and are especially informative under class imbalance~\cite{davis2006relationship}.

The training process exhibits stable convergence, with both loss and accuracy improving steadily over time. Moreover, the validation performance closely tracks the training performance, suggesting that the model does not overfit and instead learns a generalizable mapping from graph-structured system states to optimal transfer decisions. The evolution of the training and validation curves is presented in Figure~\ref{fig:training-dynamics}.

\begin{table}[!t]
    \centering
    \renewcommand{\arraystretch}{1.5}
    \caption{Multi-Seed Robustness}
    \label{tab:seed_robustness}
    \begin{tabular}{l|c|c|c|c}
        \hline
        \textbf{Metric} & \textbf{Mean} & \textbf{Std.} & \textbf{Min} & \textbf{Max} \\
        \hline
        Overall accuracy & 88.94\% & 0.08\% & 88.79\% & 89.05\% \\
        Move/stay accuracy & 91.33\% & 0.08\% & 91.22\% & 91.46\% \\
        Top-3 accuracy & 99.51\% & 0.03\% & 99.46\% & 99.55\% \\
        Move target accuracy & 63.29\% & 1.30\% & 60.58\% & 64.33\% \\
        Mean loss & 0.2361 & 0.0013 & 0.2341 & 0.2376 \\
        \hline
    \end{tabular}
\end{table}

To assess the robustness of the learned GNN policy, we repeated training and evaluation across eight random seeds using the same \( 18{,}000 \) dataset and the same \( 80/10/10 \) train-validation-test split protocol. The results, reported in Table~\ref{tab:seed_robustness}, show low variability across seeds for the main classification metrics: overall accuracy varied by less than \( 0.3\% \), move-stay accuracy remained tightly concentrated around \( 91.3\% \), and top-3 accuracy stayed above \( 99.4\% \) for all runs. This indicates that the proposed model is not sensitive to a particular random initialization and that the reported performance in Figure~\ref{fig:ml-performance} is representative of the learning framework. The move target metric exhibits larger variability, which is expected because it is evaluated only on the harder subset of decisions where a robot must transfer to another team and the model must select the exact destination. Overall, the multi-seed results demonstrate that the GNN policy learns a stable approximation of the Hamilton-admissible allocation rule across independent training runs.

\subsection{Software Simulations} \label{ssec:simulation}

To evaluate the learned GNN policy beyond the small-scale supervised training regime, software simulations of the heterogeneous fire-fighting scenario introduced in Section~\ref{sec:application} are performed. In these experiments, the learned policy is executed iteratively using the inference procedure in Algorithm~\ref{alg:gnn_inference}, allowing the multi-team system to evolve through repeated robot reallocations and continuous fire suppression dynamics.

The simulations are performed to evaluate:
\begin{enumerate}
    \item the ability of the learned policy to generate feasible heterogeneous collaborations online,
    \item the scalability of the framework beyond the problem sizes used during supervised training,
    \item the evolution of team compositions under repeated altruistic reallocations, and
    \item the resulting reduction in global fire intensity over time.
\end{enumerate}

Unless otherwise specified, each simulation begins from a randomly generated heterogeneous allocation satisfying the sensing feasibility constraint introduced in Section~\ref{sec:application}. Team interaction graphs are generated as connected random graphs, and initial fire-density fields are sampled independently for each team region. At every collaboration iteration, the GNN evaluates Hamilton-admissible robot transfers using one-hop message passing over the current interaction graph. Accepted transfers are executed according to Algorithm~\ref{alg:gnn_inference}, after which robots reposition locally within their assigned team regions and the fire density evolves according to Equation~\eqref{eq:phi_update_rule}. The process continues until no additional locally feasible collaborations are proposed by the learned policy and until all fires are extinguished.

\begin{figure*}[!t]
    \centering
    \subfloat[Initial Allocation]{
        \includegraphics[width=0.48\textwidth]{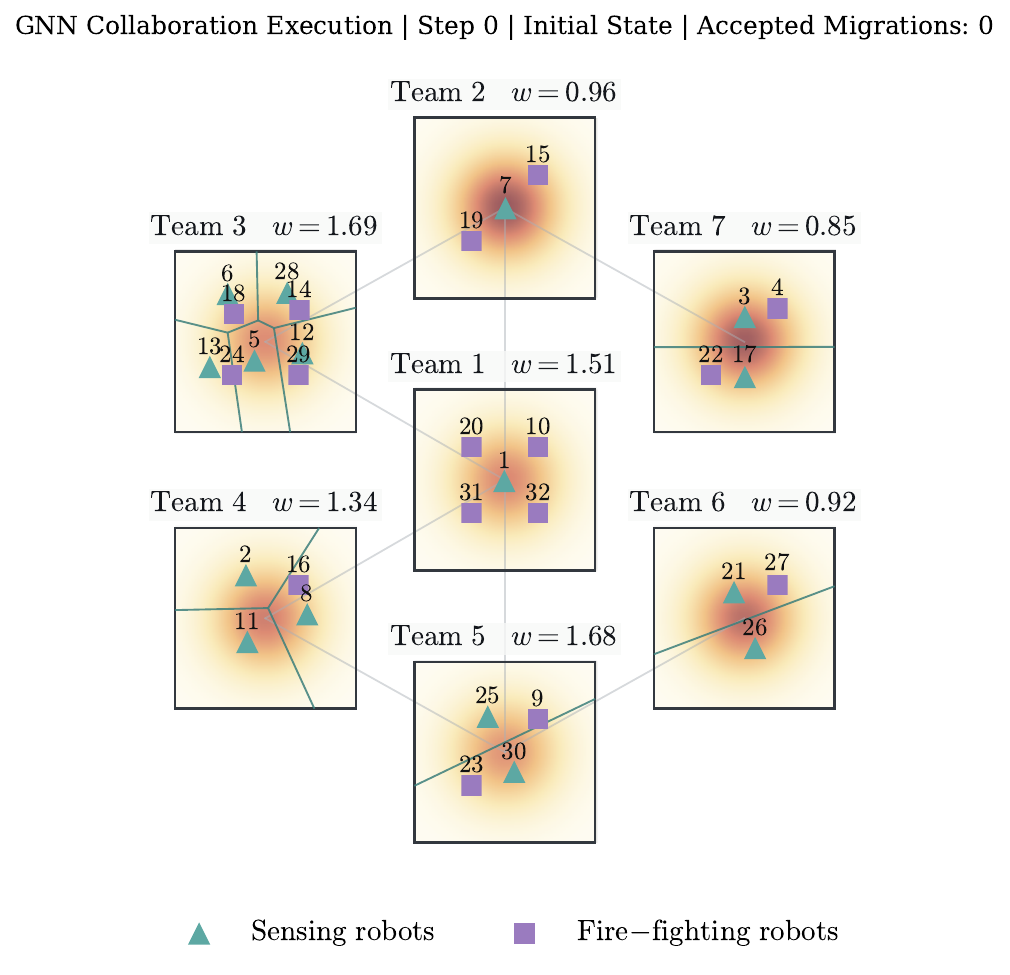}
        \label{fig:example_initial_7_teams}
    }
    \hfill
    \subfloat[Final Allocation]{
        \includegraphics[width=0.48\textwidth]{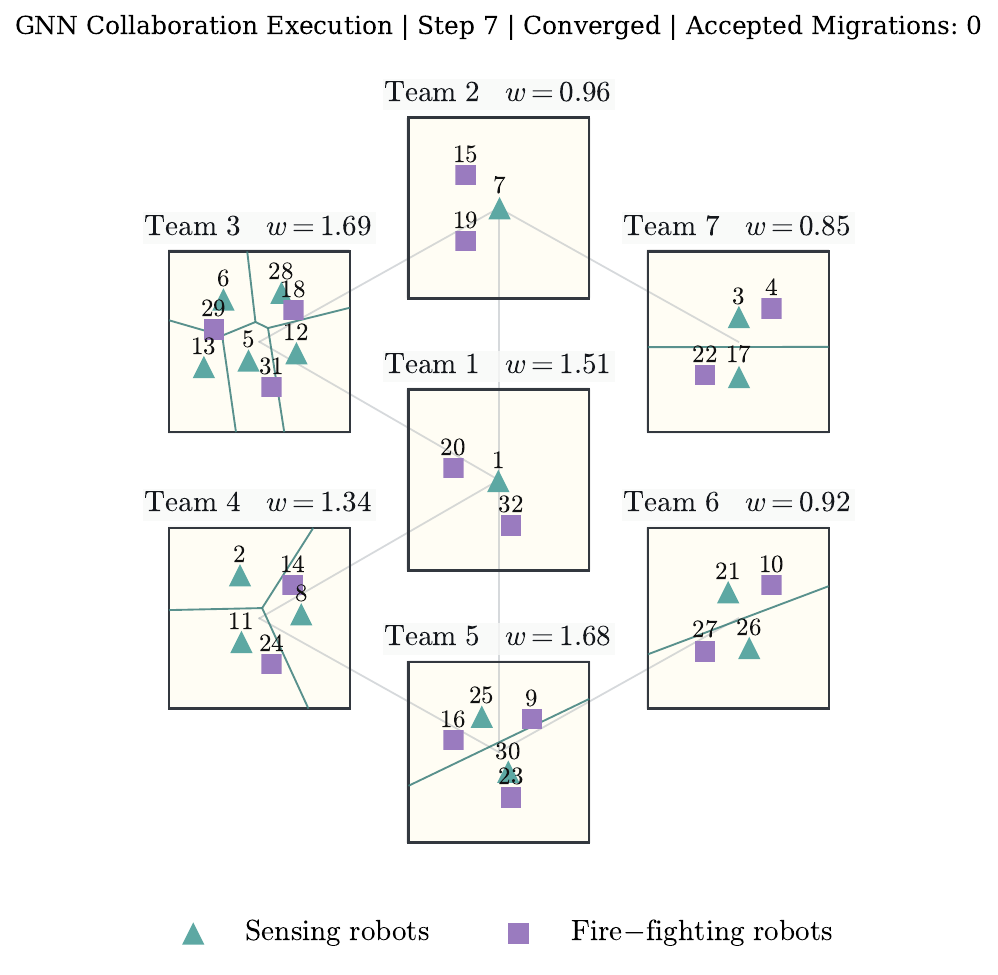}
        \label{fig:example_final_7_teams}
    }
    \hfill
    \subfloat[Initial Allocation]{
        \includegraphics[width=0.48\textwidth]{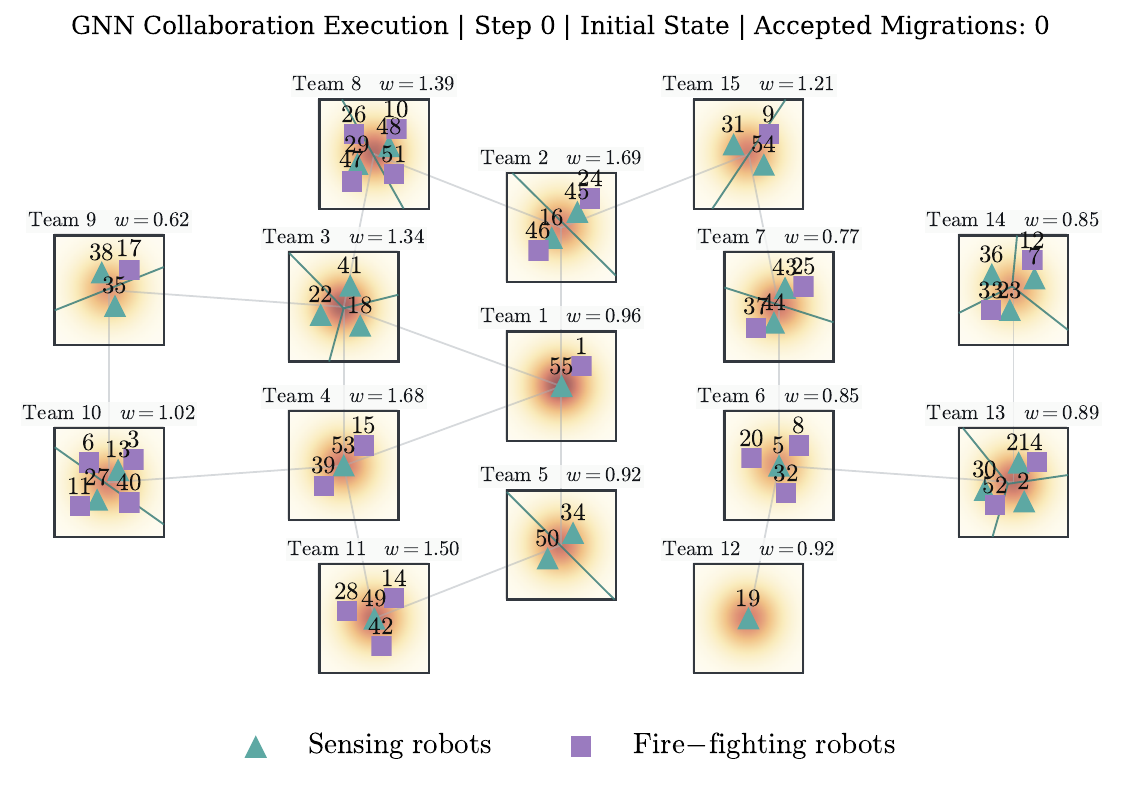}
        \label{fig:example_initial_15_teams}
    }
    \hfill
    \subfloat[Final Allocation]{
        \includegraphics[width=0.48\textwidth]{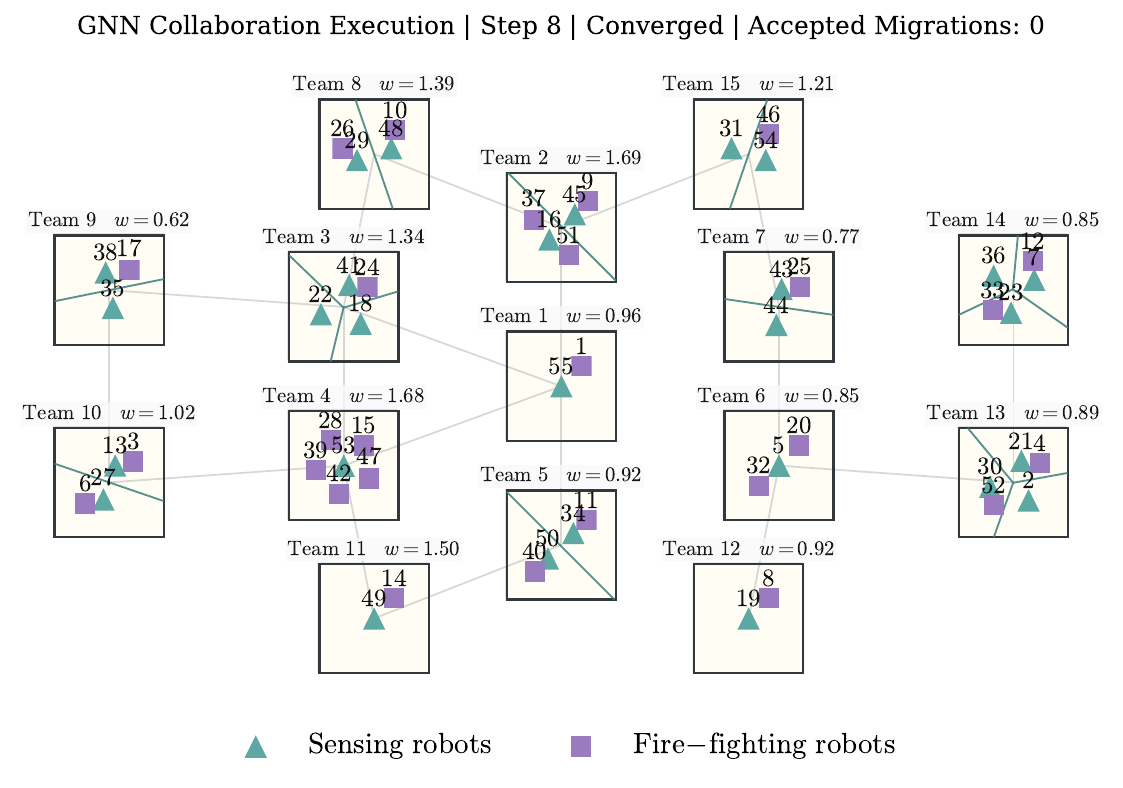}
        \label{fig:example_final_15_teams}
    }
    \caption{
        Example executions of the proposed GNN-based heterogeneous altruistic collaboration framework in software simulations. The top row shows a system with \( 7 \) teams and \( 32 \) robots, while the bottom row shows a larger out-of-training-distribution system with \( 15 \) teams and \( 55 \) robots. Sensing robots and fire-fighting robots are represented by triangles and squares, respectively, and the numbers on top of each robot represents its respective identification number. Each team is assigned a weight \( w \) signifying its importance. Starting from the initial allocations, the learned GNN policy evaluates Hamilton-admissible transfers and iteratively executes decentralized robot migrations to improve collaborative fire suppression performance.
    }
    \label{fig:gnn_execution_examples}
\end{figure*}

Figure~\ref{fig:gnn_execution_examples} illustrates execution sequences for systems of different scales. The first row (Figures~\ref{fig:example_initial_7_teams}~and~\ref{fig:example_final_7_teams}) shows a collaboration instance with \( 7 \) teams and \( 32 \) heterogeneous robots, while the second row (Figures~\ref{fig:example_initial_15_teams}~and~\ref{fig:example_final_15_teams}) illustrates a larger-scale, out-of-training-distribution deployment with \( 15 \) teams and \( 55 \) robots. In both cases, the initial allocation exhibits imbalance in capability distribution and fire intensity across teams. During execution, the learned policy reallocates sensing and fire-fighting robots across the interaction graph according to the real-time prediction of the Hamilton-admissible transfers. The resulting allocations demonstrate important behaviors: the learned policy naturally produces heterogeneous capability balancing across teams, redistributing sensing and fire-fighting resources according to the evolving mission state and the collaboration process terminates despite the absence of explicit centralized coordination during inference.

In the larger-scale simulation (Figure~\ref{fig:example_initial_15_teams}~and~\ref{fig:example_final_15_teams}), the learned policy generalizes beyond the problem sizes used during training. Although the GNN is trained only on instances containing up to \( 7 \) teams, the learned decentralized message-passing structure enables execution on substantially larger interaction graphs without architectural modification. This scalability arises because the policy operates locally over graph neighborhoods rather than relying on a fixed-size centralized state representation.

The runtime comparison between the exact optimizer and the learned GNN policy is summarized in Table~\ref{tab:runtime-scaling}.
% \textcolor{purple}{[I really like the runtime comparisons! Thank you!]}
The results demonstrate a substantial difference in scalability between the two approaches. While the exact optimization procedure can solve small problem instances, its runtime grows rapidly with the number of teams and robots due to the combinatorial nature of the heterogeneous assignment problem. In contrast, the learned GNN policy maintains low inference times even for substantially larger systems.

For small-scale problems containing 3 to 5 teams, both methods execute within sub-second runtimes. However, beginning at \( 6 \) teams and \( 18 \) robots, the exact optimization runtime increases rapidly, requiring \( 35.4 \) seconds for a single simulation. At \( 7 \) teams and \( 21 \) robots, the exhaustive optimization becomes computationally intractable within the imposed timeout budget, consistent with the NP-hardness result established in Proposition~\ref{pr:proposition}. By comparison, the learned GNN policy continues to operate in real time, requiring only \( 0.201 \) seconds for the same system size.

As the system size increases further, the runtime advantage of the learned policy becomes increasingly pronounced. Even for large-scale systems containing \( 50 \) teams and \( 150 \) robots, the GNN policy completes the entire iterative collaboration process in approximately \( 55.3 \) seconds, with an average per-step runtime of \( 2.77 \) seconds. Importantly, these runtimes correspond to the complete execution of the iterative collaboration process, including graph construction, Hamilton-feasible transfer evaluation, message passing, and repeated allocation updates over multiple collaboration rounds. The learned policy therefore remains computationally tractable in regimes where exact optimization is no longer practically executable.

Overall, Table~\ref{tab:runtime-scaling} highlights the central motivation for the proposed learning-based framework. Although exact heterogeneous allocation rapidly becomes computationally infeasible, the learned GNN policy provides a scalable approximation capable of generating real-time heterogeneous collaboration decisions on substantially larger multi-team systems.

\begin{table*}[!t]
    \centering
    \renewcommand{\arraystretch}{1.2}
    \caption{Runtime scaling of the exact optimization and the GNN policy. Exact and GNN total runtimes are full iterative execution runtimes. Mean step runtimes are computed from the per-step runtime execution.}
    \label{tab:runtime-scaling}
    \begin{tabular}{c|c|c|c|c|c|c|c}
        \hline
        \textbf{Teams} & \textbf{Robots} & \textbf{Opt. Total (s)} & \textbf{Opt. Mean Step (s)} & \textbf{Opt. Steps} & \textbf{GNN Total (s)} & \textbf{GNN Mean Step (s)} & \textbf{GNN Steps} \\
        \hline
         3 & 9 & 0.051 & 0.025 & 2 & 0.048 & 0.012 & 4 \\
         4 & 12 & 0.090 & 0.030 & 3 & 0.012 & 0.006 & 2 \\
         5 & 15 & 0.781 & 0.390 & 2 & 0.009 & 0.005 & 2 \\
         6 & 18 & 35.4  & 11.8 & 3 & 0.170 & 0.034 & 5 \\
         7 & 21 & $\infty$ & -- & 1 & 0.201 & 0.050 & 4 \\
         8 & 24 & $\infty$ & -- & -- & 0.298 & 0.075 & 4 \\
         9 & 27 & $\infty$ & -- & -- & 0.816 & 0.204 & 4 \\
        10 & 30 & $\infty$ & -- & -- & 0.658 & 0.219 & 3 \\
        15 & 45 & $\infty$ & -- & -- & 1.957 & 0.326 & 6 \\
        25 & 75 & $\infty$ & -- & -- & 2.787 & 0.697 & 4 \\
        35 & 105 & $\infty$ & -- & -- & 16.3 & 2.335 & 7 \\
        40 & 120 & $\infty$ & -- & -- & 36.0 & 2.001 & 18 \\
        45 & 135 & $\infty$ & -- & -- & 48.5 & 2.425 & 20 \\
        50 & 150 & $\infty$ & -- & -- & 55.3 & 2.767 & 20 \\
        \hline
    \end{tabular}
\end{table*}

\subsection{Robotic Testbed Experiments} \label{ssec:robotrobot}

To validate the proposed framework on physical hardware, we deploy the learned GNN policy on a multi-robot testbed. The experiments evaluate whether the learned heterogeneous collaboration policy can successfully execute decentralized robot migrations and improve mission performance in a real robotic environment.

The experimental setup consists of \( 4 \) teams operating in distinct spatial regions connected through a team interaction graph. A total of \( 15 \) robots are used, partitioned into sensing robots and fire-fighting robots according to the capability model introduced in Section~\ref{sec:application}. The robots emulate the heterogeneous capability classes through software-defined roles. Robots marked with triangle overlays correspond to sensing robots, while robots marked with square overlays correspond to fire-fighting robots. The red heatmaps represent the fire-density fields \( \phi_v(q) \) associated with each team region. The learned GNN policy runs externally in real time and communicates robot transfer decisions to the robotic execution layer.

Similar to the software simulations in Section~\ref{ssec:simulation}, the experimental procedure follows the iterative inference pipeline described in Algorithm~\ref{alg:gnn_inference}. At each collaboration round, the current multi-team state is encoded as a graph and processed by the learned GNN policy. The model evaluates Hamilton-admissible transfers through one-hop message passing over the interaction graph and predicts robot migration decisions. Accepted transfers are then executed physically by commanding robots to navigate from their current team region to the destination region. Once migrations terminate, robots locally reposition within their assigned regions and continue the fire-fighting task according to the dynamics in Equation~\eqref{eq:phi_update_rule}.

Figure~\ref{fig:gnn_execution_robot} illustrates the execution sequence of one hardware experiment. Figure~\ref{fig:initial} shows the initial allocation, where teams exhibit different sensing and fire-fighting compositions together with different fire intensities. The learned GNN policy identifies beneficial collaborations and initiates robot migrations across the interaction graph. Figure~\ref{fig:collab01} shows the first collaboration phase, during which robots begin migrating toward teams predicted to benefit from additional capabilities. Figure~\ref{fig:collab02} illustrates a subsequent collaboration round after the system state has been updated following the first migrations. Finally, Figure~\ref{fig:final} shows the final allocation after the collaboration process converges and no additional beneficial Hamilton-admissible transfers are predicted by the learned policy.

The physical experiments demonstrate the proposed framework under realistic robotic constraints; the learned policy continues to generate feasible collaborations and successfully supports robot migrations across teams. While the learned policy achieves strong performance and scalability, it does not guarantee global optimality and may occasionally produce suboptimal transfer sequences due to approximation errors in the learned action scores. Furthermore, because decisions are generated locally through one-hop message passing, performance may degrade in environments where optimal reallocations depend strongly on long-range (or multi-hop) global reassignment.

\begin{figure*}[!t]
    \centering
    \subfloat[Initial Allocation]{
        \includegraphics[width=0.48\textwidth]{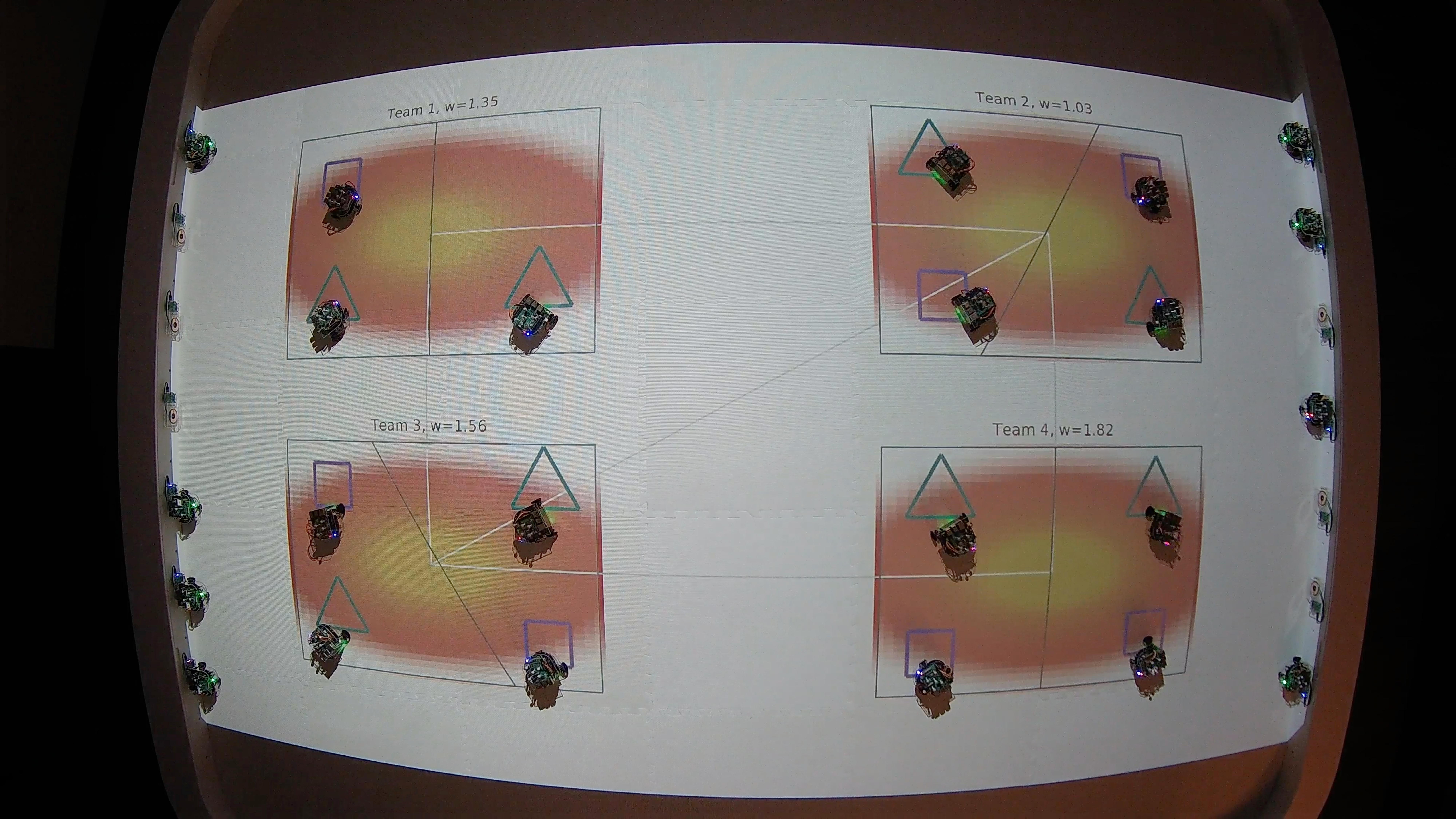}
        \label{fig:initial}
    }
    \hfill
    \subfloat[First Robot Migrations]{
        \includegraphics[width=0.48\textwidth]{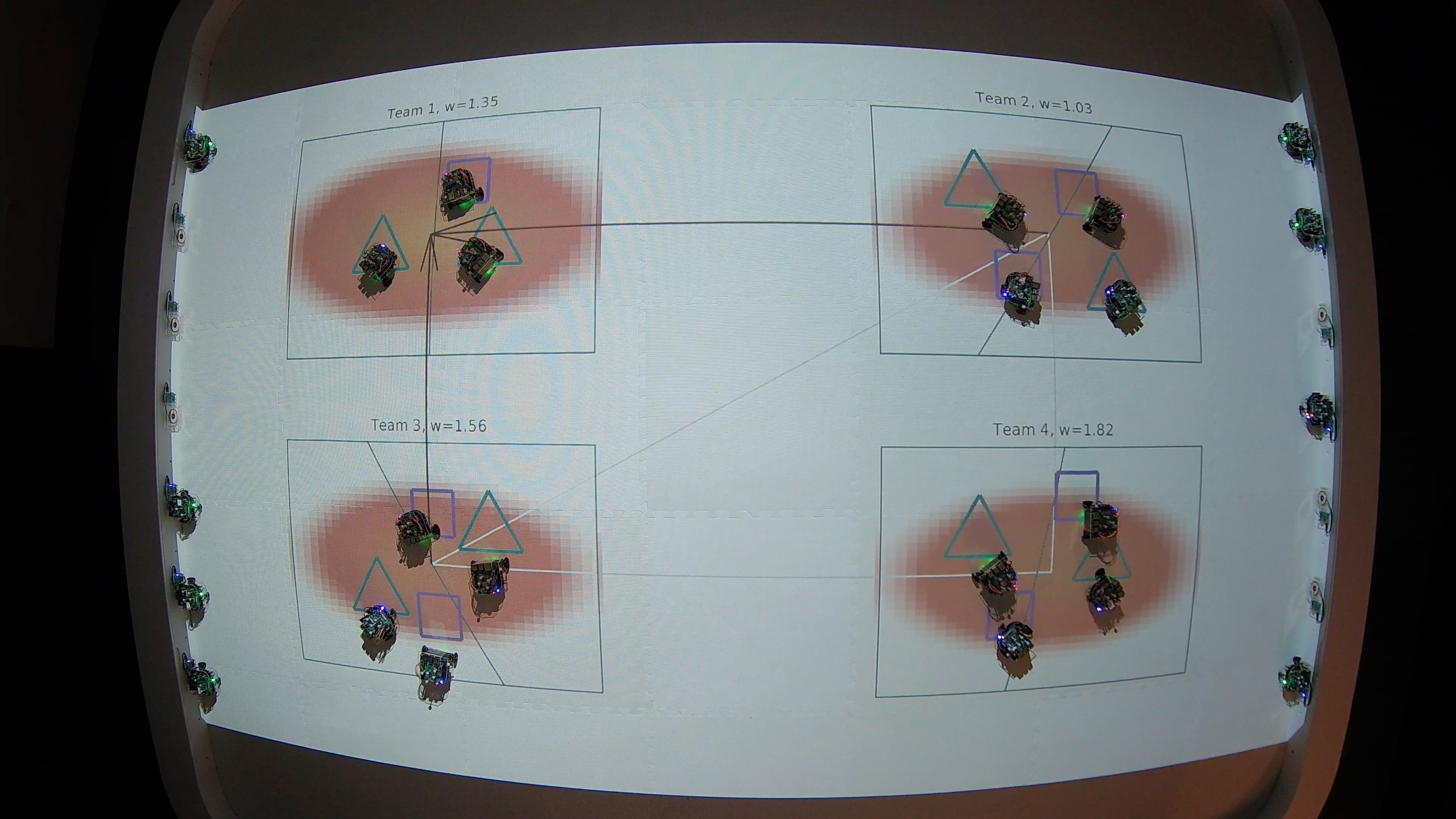}
        \label{fig:collab01}
    }
    \hfill
    \subfloat[Second Robot Migrations]{
        \includegraphics[width=0.48\textwidth]{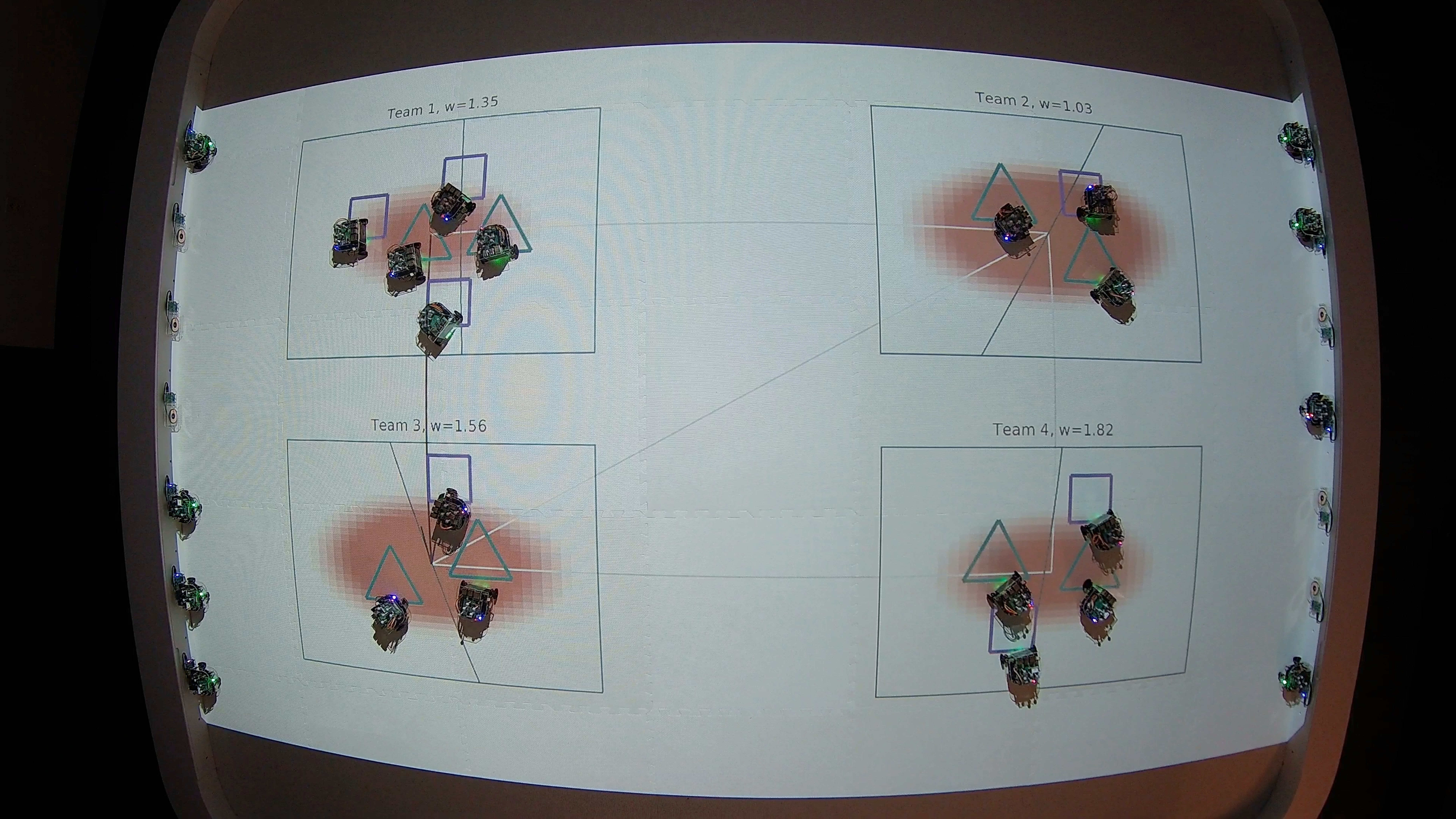}
        \label{fig:collab02}
    }
    \hfill
    \subfloat[Final Allocation]{
        \includegraphics[width=0.48\textwidth]{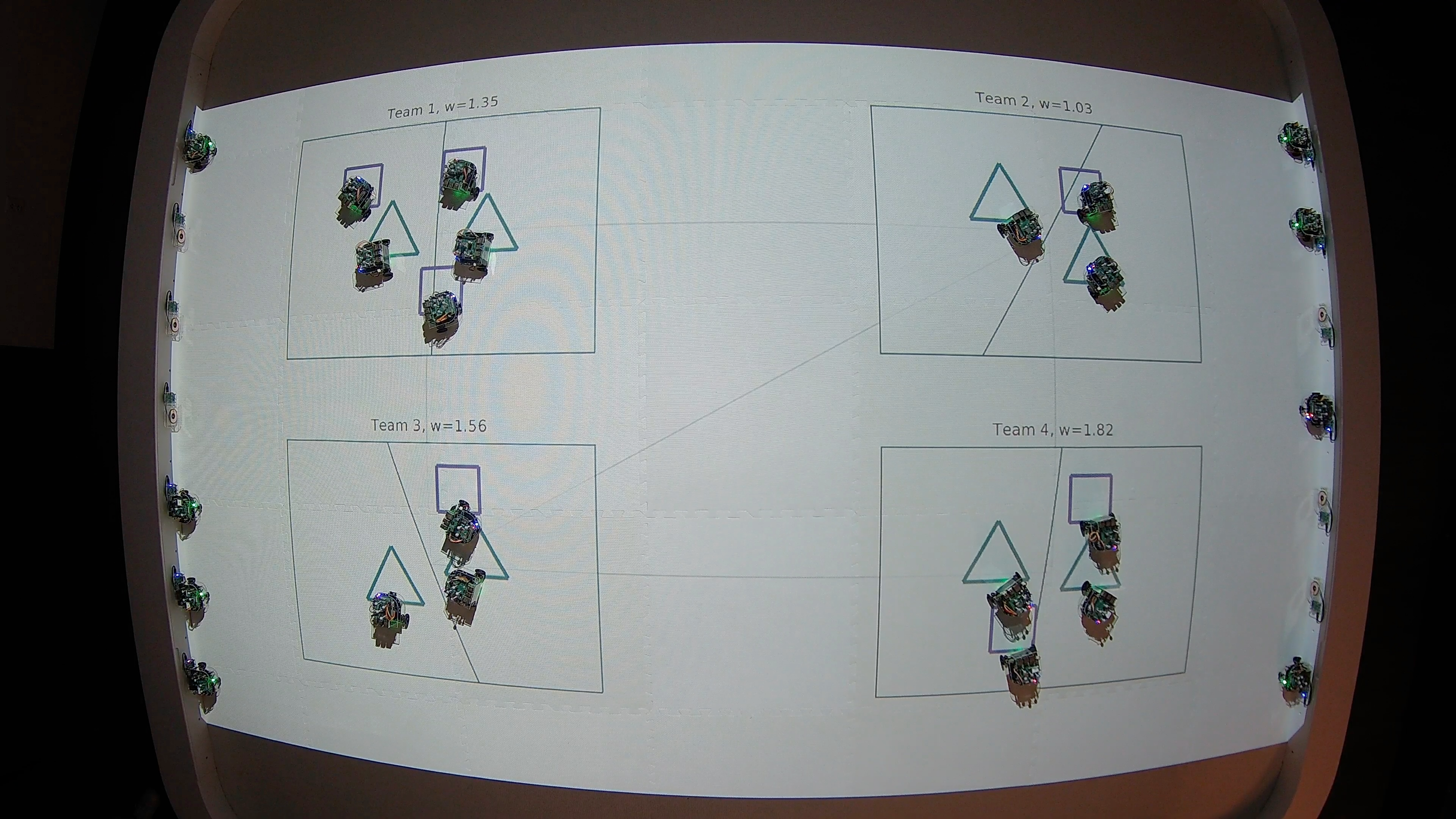}
        \label{fig:final}
    }
    \caption{
        Execution sequence of the proposed GNN-based heterogeneous altruistic collaboration framework on a multi-robot testbed. Teams operate in distinct spatial regions with heterogeneous sensing and fire-fighting robots, shown as triangles and squares, respectively. The learned policy evaluates Hamilton-admissible transfers online and coordinates decentralized robot migrations across teams to improve collaborative fire-fighting performance. The sequence illustrates the initial allocation, intermediate migration phases, and the final allocation after the collaboration process converges.  The team weights, from top-left to bottom-right, are \( 1.35 \), \( 1.03 \), \( 1.56 \), and \( 1.82 \), respectively.
    }
    \label{fig:gnn_execution_robot}
\end{figure*}

\section{Conclusion} \label{sec:conclusion}

In conclusion, this paper presented a learning-based framework for heterogeneous altruistic collaboration in multi-team robotic systems. Building upon prior work that adapts Hamilton’s rule as a decentralized mechanism for regulating inter-team robot transfers, we extended the formulation to heterogeneous settings with capability-dependent mission evaluation functions and transfer costs. We showed that the resulting allocation problem is NP-hard and proposed a GNN policy under a CTDE framework to approximate Hamilton-admissible allocation decisions through decentralized message passing over the team interaction graph. The framework was instantiated in a heterogeneous fire-fighting scenario involving robots with complementary capabilities. Experimental results demonstrated that the learned policy achieves strong predictive performance, scales beyond the tractable regime by exact optimization, and maintains low inference runtime in large-scale simulations. Furthermore, deployment on a multi-robot testbed validated the ability of the proposed framework to execute decentralized robot migrations and support heterogeneous collaboration on physical robotic hardware.

\section*{Acknowledgment}
The authors want to thank Professor Yanning Shen and Alex Nguyen for helpful discussions.

\bibliographystyle{IEEEtran}
\bibliography{mybib}

\end{document}